\newcommand{\stitle}[1]{\vspace*{0.4em}\noindent{\bf #1.\/}}
\newtheorem{theorem}{Theorem}
\title{Efficient Diversity-based Experience Replay for Deep Reinforcement Learning}
\author{
Kaiyan Zhao\textsuperscript{1,2}\thanks{Authors contributed equally}\and
Yiming Wang\textsuperscript{2$\ast$}\and
Yuyang Chen\textsuperscript{1}\and 
Yan Li\textsuperscript{3}\and
Leong Hou U\textsuperscript{2}\and
Xiaoguang Niu\textsuperscript{1}\thanks{Corresponding author}\\
\affiliations
\textsuperscript{1}School of Computer Science, Wuhan University, Wuhan, China\\
\textsuperscript{2}State Key Laboratory of Internet of Things for Smart City, University of Macau, Macao, China\\
\textsuperscript{3}School of Artificial Intelligence Shenzhen Polytechnic University, China\\
\emails
\{zhao.kaiyan, xgniu\}@whu.edu.cn,wang.yiming@connect.um.edu.mo,yb57411@szpu.edu.cn
\\chenyuyang0520@gmail.com,ryanlhu@um.edu.mo
}
\begin{document}

\maketitle

\begin{abstract}
Experience replay is widely used to improve learning efficiency in reinforcement learning by leveraging past experiences. 
However, existing experience replay methods, whether based on uniform or prioritized sampling, often suffer from \textit{low efficiency}, particularly in real-world scenarios with \textit{high-dimensional state spaces}. 
To address this limitation, we propose a novel approach, Efficient Diversity-based Experience Replay (EDER). EDER employs a determinantal point process to model the diversity between samples and prioritizes replay based on the diversity between samples. 
To further enhance learning efficiency, we incorporate Cholesky decomposition for handling large state spaces in realistic environments. Additionally, rejection sampling is applied to select samples with higher diversity, thereby improving overall learning efficacy.
Extensive experiments are conducted on robotic manipulation tasks in MuJoCo, Atari games, and realistic indoor environments in Habitat. The results demonstrate that our approach not only significantly improves learning efficiency but also achieves superior performance in high-dimensional, realistic environments.
\end{abstract}
\section{Introduction}

In recent years, Deep Reinforcement Learning~\cite{franccois2018introduction,yiming1} has surged in popularity, achieving remarkable success in complex decision-making tasks. DRL has been successfully applied to games~\cite{schrittwieser2020mastering,silver2017mastering}, robotic control~\cite{andrychowicz2020learning,levine2016end}, autonomous driving scenarios including traffic light control~\cite{yangming1,yangming3,yangming2}, and other domains, demonstrating its powerful learning and decision-making capabilities.

\begin{figure}[!htb]
    \centering
    \includegraphics[scale=1.45]{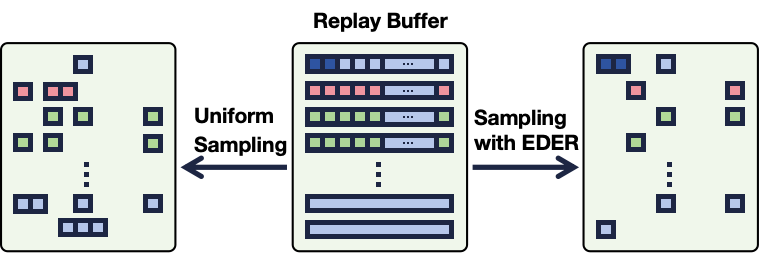}
    \caption{Sample distribution comparison of the replay buffer. Left: Uniform sampling results in an imbalanced distribution, with some data types overrepresented and others underrepresented. Right: Our method achieves a more balanced and diverse selection of samples, enhancing overall diversity and improving learning efficiency.}
    \label{fig:confounder_illustration}
    \vspace{-10pt}
\end{figure}

However, DRL still faces significant challenges in practical applications, particularly in handling sparse reward signals~\cite{hare2019dealing}, high-dimensional state spaces~\cite{ibrahimi2012efficient}, and low sample efficiency~\cite{yarats2021improving,yiming2}. Sparse reward signals make it difficult for agents to learn effective policies from limited positive feedback, resulting in slow and inefficient learning processes. Additionally, high-dimensional state spaces further complicate the learning process and increase computational burdens, making existing methods inefficient in large-scale and complex environments.

To address these issues, Experience Replay (ER) has been widely adopted as a key mechanism. ER improves sample efficiency and stabilizes the learning process by storing the agents' past experiences and randomly sampling them for training. Despite the improvements ER offers in sample efficiency, existing methods still suffer from inefficiency and suboptimal performance in high-dimensional state spaces. Recent studies~\cite{andrychowicz2020learning,levine2016end,todorov2012mujoco,jiang2024importance,zhao2018energy,fang2019curriculum} have focused on enhancing ER’s sampling strategies to improve their applicability and efficiency in complex environments. For instance, Hindsight Experience Replay (HER)~\cite{andrychowicz2017hindsight} generates more positive feedback samples to enhance learning efficiency; Prioritized Experience Replay (PER)~\cite{schaul2015prioritized} assigns priorities to samples based on their temporal difference (TD) errors; and Topological Experience Replay (TER)~\cite{hong2022topological} builds a trajectory graph and performs breadth-first updates from terminal states. Large Batch Experience Replay (LaBER)~\cite{pmlr-v162-lahire22a} improves sample efficiency by sampling large batches and performing focused updates. The Reducible Loss (ReLo) method~\cite{sujit2023prioritizing} ranks samples based on their learnability, measured by consistent loss reduction. However, these approaches generally struggle to efficiently select valuable samples in high-dimensional state spaces, leading to persistent issues of low efficiency and high-dimensional state space challenges in DRL.

To tackle these challenges, we propose a novel Experience Replay framework, Efficient Diversity-based Experience Replay (EDER). EDER utilizes Determinantal Point Processes (DPP)~\cite{kulesza2012determinantal} to model the diversity among samples and determines replay priorities based on this diversity, effectively avoiding the redundant sampling of ineffective data points. Furthermore, to handle high-dimensional state spaces in real-world environments, EDER employs Cholesky decomposition~\cite{krishnamoorthy2013matrixinversionusingcholesky}, significantly reducing computational complexity. Combined with rejection sampling techniques~\cite{neal2003slice,azadi2018discriminator}, EDER selects samples with higher diversity for training, thereby further enhancing overall learning efficiency.

Our main contributions are as follows. Firstly, we propose the Efficient Diversity-based Experience Replay (EDER) framework, which prioritizes sample diversity and significantly enhances experience replay (ER) efficiency, especially in high-dimensional state spaces and environments with sparse rewards. Secondly, we introduce Cholesky decomposition and rejection sampling to effectively address computational bottlenecks in large state spaces and optimize the ER mechanism by selecting more diverse samples. Lastly, we conduct extensive experimental validations across multiple complex environments, including Habitat~\cite{habitat19iccv,puig2023habitat3}, Atari games~\cite{mnih2013playing}, and MuJoCo~\cite{todorov2012mujoco}. The results demonstrate that EDER not only significantly improves learning efficiency but also achieves superior performance in high-dimensional, realistic environments, thereby validating its effectiveness and adaptability in various complex settings.

\begin{figure*}[!htb]
\vspace{-15pt}
\subfloat{\includegraphics[width=1\textwidth]{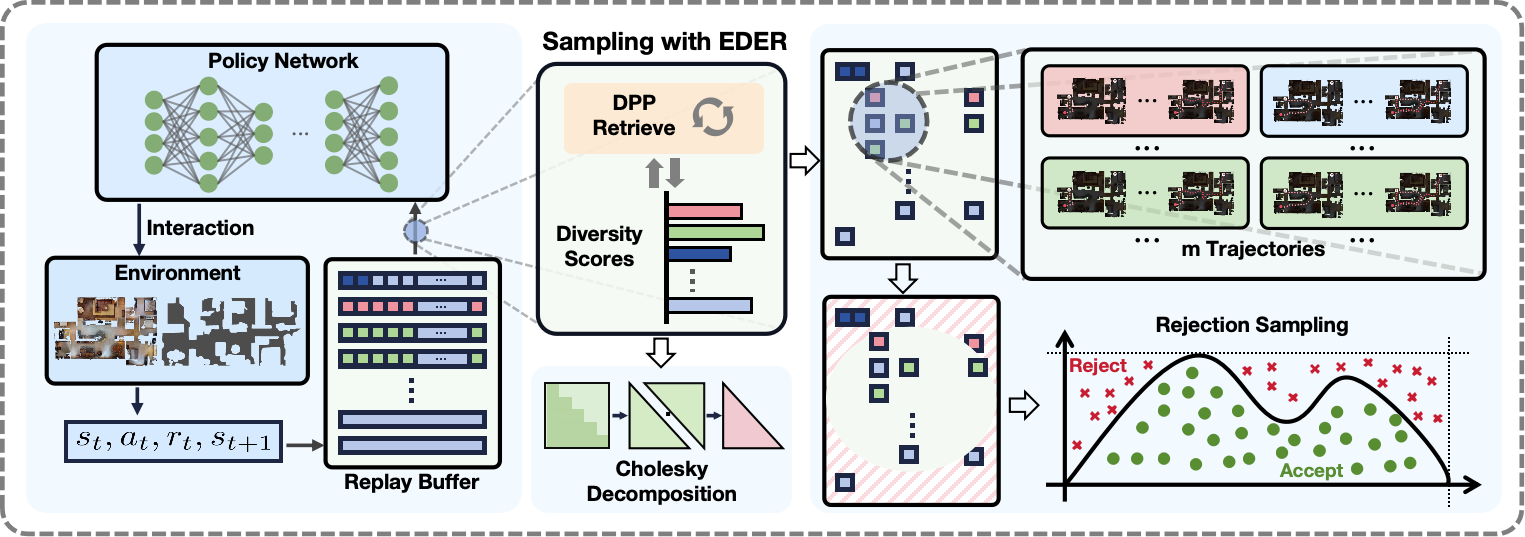}}
\caption{In the EDER framework, we leverage the Determinantal Point Process (DPP) to compute diversity scores for trajectories via Cholesky decomposition, enhancing the sampling process. Specifically, our method first uses these diversity scores to select the top 
$m$ most diverse trajectories. Next, we apply a rejection sampling technique to choose a subset of these trajectories for policy updates. The resulting diverse samples facilitate more efficient learning, particularly in high-dimensional environments.}
\vspace{-10pt}
\label{fig:framework}
\end{figure*}

\section{Preliminaries}

\stitle{Reinforcement Learning}
Reinforcement Learning (RL) is a learning paradigm where agents autonomously learn to make sequential decisions by interacting with an environment, with the goal of maximizing cumulative rewards. The problem is typically formalized as a Markov Decision Process (MDP), which is defined by a tuple \( \langle S, A, P, R, \gamma \rangle \), where \( S \) represents the state space, \( A \) represents the action space, \( P \) defines the state transition probabilities, \( R \) denotes the reward function, and \( \gamma \) is the discount factor. At each discrete time step \( t \), the environment is in a state \( s_t \), and the agent selects an action \( a_t \) according to a policy \( \pi \). The environment then transitions to a new state \( s_{t+1} \) based on the transition probability \( P(s_{t+1} \mid s_t, a_t) \), and the agent receives a scalar reward \( r_{t+1} \). The agent's objective is to learn an optimal policy \( \pi^* \) that maximizes the expected cumulative discounted reward starting from any initial state \( s_t \):
\[
V^\pi(s_t) = \mathbb{E}\left[\sum_{k=0}^{\infty} \gamma^k r_{t+k+1} \mid s_t=s, \pi \right],
\]
where \( V^\pi(s_t) \) is the value function that estimates the expected return when following policy \( \pi \) from state \( s_t \).

\stitle{Experience Replay}
Experience replay is essential in deep reinforcement learning, enabling agents to store and revisit past experiences via a replay buffer. This mechanism mitigates the issue of correlated data in online learning and improves sample efficiency. Two prominent techniques that enhance experience replay are Prioritized Experience Replay (PER) and Hindsight Experience Replay (HER):
PER improves replay efficiency by prioritizing experiences based on their learning value, typically measured by the temporal difference (TD) error \( \delta_t = r_{t+1} + \gamma V(s_{t+1}) - V(s_t) \), where \( \gamma \) is the discount factor and \( V(s_t) \) is the value function of state \( s_t \).In PER, an experience is assigned a priority \( p_t = |\delta_t| + \epsilon \), where \( \epsilon \) ensures non-zero priority. The probability \( P(i) \) of sampling an experience is proportional to its priority:
\[
P(i) = \frac{p_i^\alpha}{\sum_k p_k^\alpha},
\]
where \( \alpha \) controls the degree of prioritization. By focusing on experiences with higher TD errors, PER enhances learning efficiency and accelerates convergence. HER addresses the challenge of sparse rewards by augmenting the replay buffer with re-labeled experiences, wherein failed attempts are reinterpreted as successes for alternative goals. Specifically, if the agent fails to achieve the intended goal \( g \) at state \( s_t \), HER re-labels this experience as successful for a new goal \( g' \), such as a subsequent state \( s_{t+k} \). The re-labeled reward function is defined as follows:

\[
r_{t+1} = 
\begin{cases} 
1 & \text{if } s_{t+k} = g', \\
0 & \text{otherwise}.
\end{cases}
\]
This approach increases the number of successful experiences, thereby enhancing learning efficiency in environments with sparse rewards by effectively increasing the density of positive samples.

\stitle{Determinantal Point Processes}
Determinantal Point Processes (DPPs) are widely used probabilistic models that capture diversity within a set of points. For a discrete set \( Y = \{x_1, x_2, \dots, x_N\} \), a DPP defines a probability measure over all possible subsets of \( Y \), where the probability of selecting a subset \( Y \subseteq Y \) is proportional to the determinant of a positive semi-definite kernel matrix \( L \) corresponding to \( Y \). Specifically, the probability of sampling a subset \( Y \) is:
\[
P(Y) = \frac{\text{det}(L_Y)}{\text{det}(L + I)},
\]
where \( L_Y \) is the principal submatrix of \( L \) indexed by the elements in \( Y \), and \( I \) is the identity matrix. The determinant \( \text{det}(L_Y) \) measures the diversity of \( Y \) by the volume spanned by the vectors associated with \( Y \).
In practice, the kernel matrix \( L \) is often the Gram matrix \( L = X^T X \), where each column of \( X \) represents a feature vector of an element in \( Y \). The geometric interpretation of DPPs implies that subsets with more orthogonal feature vectors—indicating higher diversity—are more likely to be selected. This makes DPP effective for sampling diverse trajectories and goals in reinforcement learning, where diversity in the experience buffer is crucial for robust learning.

\section{Methodology}

In this study, we propose a novel approach named Efficient Diversity-based Experience Replay (EDER), which enhances exploration and sample efficiency in reinforcement learning (RL) through a diversity-based trajectory selection module, which selects transitions from each trajectory based on their diversity rankings. The EDER algorithm leverages Determinantal Point Processes (DPPs) to evaluate the diversity of trajectories, enabling the exploration of a broader range of informative data. Following exploration, high-quality data is replayed to improve training efficiency. Furthermore, we employ Cholesky decomposition and rejection sampling to enhance computational efficiency, particularly in realistic environments with high-dimensional state spaces.

\stitle{Data Preprocessing}
We define the state transition dataset $T$  as a collection of state transitions accumulated during the agent's interaction with the environment, represented as: $T = \left\{ \{s_0, s_1\}, \{s_2, s_3\}, \ldots, \{s_{T-1}, s_T\} \right\} $ where each element $\{s_i, s_{i+1}\}$ represents a transition from state $s_i$ to state $ s_{i+1} $. 
In our framework, we partition $ T $ into multiple partial trajectories of length $ b $, denoted as $ \tau_j $, each covering a state transition from $ t=js $ to $ t=js+b-1 $, where $ s $ represents the sliding step length. The trajectories are quantified by sliding the window of length $ s = b $, where the meticulous segmentation allows us to analyze and understand the behavioral patterns of intelligent agents at different stages. The specific formula is as follows:
\begin{equation}
T = \left\{ \tau_j = \{s_{jb}, s_{jb+1}, \ldots, s_{jb+b-1}\} \mid j = 0, 1, \ldots,  \frac{T}{b} - 1 \right\}
\end{equation}
Here, $ \tau_j $ denotes the partial trajectory of group $ j $ covering the state transition from $ s_{jb} $ to $ s_{jb+b-1} $. Each $ \tau_j $ is a sliding window of length $ b $, demonstrating the behavior of the agent and its environmental adaptation during that time period.

\subsection{Diversity-Based Trajectory Selection Module}

The objective of this module is to select diverse trajectories from the replay buffer, enhancing learning by utilizing a wide range of experiences. A set of summary timelines describing the key trajectory events is generated from the entire collection of trajectories, which involves the following steps:

\stitle{Trajectory Segmentation} The entire sequence of state transitions during an interaction, denoted as $\tau$, is segmented into several partial trajectories $\tau_j$ of length $b$. Each segment $\tau_j$ covers transitions from state $s_n$ to $s_{n+b-1}$, allowing for detailed capture of dynamics between state transitions. For clarity, we set a sliding window of $b = 2$ in this part, while other values are explored in the ablation studies. Under this setting, a trajectory $\tau$ can be divided into $N_p$ partial segments.

\vspace{-10pt}
\[
\tau= \bigl\{
\{\underbrace{s_0, s_1}_{\tau_1}\}, 
\{\underbrace{s_2, s_3}_{\tau_2}\},
\{\underbrace{s_4, s_5}_{\tau_3}\},
\ldots, 
\{\underbrace{s_{T-1}, s_T}_{\tau_{N_p}}\}
\bigr\}
\]
\vspace{-10pt}

\stitle{Diversity Assessment}To effectively evaluate the diversity of each partial trajectory $\tau_j$, we adopt the theoretical framework of Determinantal Point Processes(DPPs). Specifically, the diversity metric $d_{\tau_j}$ for a partial trajectory $\tau_j$ is defined as the determinant of its corresponding kernel matrix:

\begin{equation}
  d_{\tau_j} = \det(L_{\tau_j})
  \label{eq:d_tau1}
\end{equation}
Intuitively, the determinant quantifies the $n$-dimensional volume spanned by the embedded state transitions in $\tau_j$, assigning higher values to sets of transitions that are more linearly orthogonal and thus more diverse.
Here, $L_{\tau_j}$ is the kernel matrix constructed from the state transitions within trajectory $\tau_j$, defined as:

\begin{equation}
  L_{\tau_j} = M^T M
  \label{eq:L_tauj}
\end{equation}
The columns of matrix $M$ are the $\ell_2$-normalized vector representations $\hat{s}$ of each state $s$ in trajectory $\tau_j$.
\begin{theorem}[Correlation between Determinant and Diversity]
Let $M \in \mathbb{R}^{d \times b}$ be a matrix whose columns are the $\ell_2$-normalized state vectors $\hat{s}$ in trajectory $\tau_j$. The determinant $\det(L_{\tau_j})$ of the kernel matrix $L_{\tau_j} = M^T M$ reaches its maximum value when the state vectors are mutually orthogonal, indicating the highest diversity of the trajectory.
\label{thm:det_diversity}
\end{theorem}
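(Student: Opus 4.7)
The plan is to prove Theorem~\ref{thm:det_diversity} by a direct application of Hadamard's inequality to the Gram matrix $L_{\tau_j} = M^T M$. First I would record the basic structure of $L_{\tau_j}$: its $(i,k)$ entry is the inner product $\langle \hat{s}_i, \hat{s}_k\rangle$, so because the columns of $M$ are $\ell_2$-normalized, every diagonal entry equals $\|\hat{s}_i\|^2 = 1$. In particular $\operatorname{tr}(L_{\tau_j}) = b$, and $L_{\tau_j}$ is positive semi-definite by construction.

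Next I would invoke Hadamard's inequality for positive semi-definite matrices, which states that $\det(A) \le \prod_i a_{ii}$, with equality if and only if $A$ is diagonal. Applied to $L_{\tau_j}$, this immediately gives $\det(L_{\tau_j}) \le 1$. The equality case $\det(L_{\tau_j}) = 1$ then forces $L_{\tau_j}$ to be diagonal, which means $\langle \hat{s}_i, \hat{s}_k\rangle = 0$ for all $i \neq k$, i.e., the state vectors are mutually orthogonal. This establishes both directions: orthogonality yields the maximum $\det(L_{\tau_j}) = 1$, and conversely any non-orthogonal configuration strictly reduces the determinant.

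As a sanity check (and an alternative route in case Hadamard's inequality is preferred to be derived rather than cited), I would use the spectral argument: let $\lambda_1, \ldots, \lambda_b \ge 0$ be the eigenvalues of $L_{\tau_j}$. Then $\sum_i \lambda_i = \operatorname{tr}(L_{\tau_j}) = b$, and by the AM--GM inequality $\det(L_{\tau_j}) = \prod_i \lambda_i \le \left(\frac{1}{b}\sum_i \lambda_i\right)^b = 1$, with equality iff all $\lambda_i = 1$, i.e., $L_{\tau_j} = I$, which is again the orthogonality condition. A complementary geometric interpretation is that $\sqrt{\det(L_{\tau_j})}$ is the $b$-dimensional volume of the parallelepiped spanned by the columns of $M$, and among parallelepipeds with unit-length edges the volume is maximized precisely when the edges are mutually perpendicular.

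The main (and essentially only) subtlety is the regime $b > d$: in that case the columns of $M$ cannot all be mutually orthogonal, so $L_{\tau_j}$ is rank-deficient and $\det(L_{\tau_j}) = 0$, and the theorem should be read under the implicit assumption $b \le d$ so that the orthogonality optimum is attainable. I would flag this caveat briefly, but otherwise the argument is a two-line consequence of Hadamard's inequality and needs no heavy machinery.
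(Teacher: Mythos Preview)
Your proposal is correct and, in fact, cleaner than the paper's own argument. Both you and the paper open with the geometric interpretation of $\det(L_{\tau_j})$ as the squared volume of the parallelepiped spanned by the columns of $M$. The divergence is in the algebraic half: the paper argues via the singular values $\sigma_i$ of $M$, asserting that $\ell_2$-normalization of the columns forces $0 \le \sigma_i \le 1$ and hence $\prod_i \sigma_i^2 \le 1$. That per-singular-value bound is not correct in general (e.g., take $M$ with two identical unit columns: then $M^\top M$ has eigenvalues $0$ and $2$, so $\sigma_{\max} = \sqrt{2}$), so the paper's inequality does not follow from the stated premise alone. Your Hadamard-inequality route sidesteps this entirely by working directly with the diagonal entries $(L_{\tau_j})_{ii}=1$, and your AM--GM alternative is exactly the right repair of the paper's idea: it replaces the per-$\sigma_i$ bound with the trace constraint $\sum_i \lambda_i = \sum_i \sigma_i^2 = b$, which \emph{does} follow from unit-norm columns. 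Your caveat about the regime $b>d$ (where the maximum is $0$ and orthogonality is unattainable) is also well taken and is not mentioned in the paper.
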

Proof in Appendix~\ref{sec:proofs}. The choice of Determinantal Point Processes is motivated by the ability of $\det(L_{\tau_j})$ to effectively measure the diversity of state vectors within trajectory $\tau_j$. Based on Theorem \ref{thm:det_diversity}, a larger determinant indicates higher diversity of the trajectory.

The determinant $\det(L_{\tau_j}) = \det(M^T M)$ is equal to the square of the volume of the parallelepiped spanned by the columns of matrix $M$. When the vectors are mutually orthogonal, the volume and thus the determinant reaches its maximum value, reflecting the highest independence and diversity of the state vectors. Conversely, if the vectors are linearly dependent, both the volume and the determinant decrease, indicating reduced diversity. Additionally, in DPPs, the kernel matrix $L_{\tau_j}$ captures the similarities between state vectors, inherently favoring the selection of diverse and minimally redundant subsets. Therefore, DPPs are an ideal choice for evaluating the diversity of trajectories in reinforcement learning \cite{kunaver2017diversity}. A larger $d_{\tau_j}$ indicates that the state vectors are more uniformly distributed in the feature space with lower similarity, reflecting higher diversity. This is crucial for policy training in reinforcement learning, as diversified data facilitates better policy generalization and adaptation to various environmental conditions.

\stitle{Sampling Strategy} The total diversity of a trajectory $\tau$, denoted as $d_{\tau}$, is defined as the sum of the diversities of all its constituent partial trajectories:
\begin{equation}
d_{\tau} = \sum_{j=1}^{N_p} d_{\tau_j}
\label{eq:d_tau2}
\end{equation}
Equation~(\ref{eq:d_tau2}) provides a comprehensive measure, effectively reflecting the overall diversity of the trajectory.
We employ a non-uniform sampling strategy to prioritize trajectories with higher diversity:
\begin{equation}
p(\tau_i) = \frac{d_{\tau_i}}{\sum_{n=1}^{N_e} d_{\tau_n}},
\label{eq:tau_i}
\end{equation}
where $N_e$ is the total number of trajectories in the replay buffer, this strategy enhances learning efficiency by increasing the likelihood of selecting highly diverse trajectories, thereby enabling the agent to effectively learn and adapt to various environmental conditions.

Although the determinant effectively measures diversity, its direct computation in high-dimensional state spaces is \textit{\textbf{computationally intensive}}, especially for large trajectory lengths $b$. Therefore, we employ Cholesky decomposition and rejection sampling to optimize computation speed in the following section. The diversity metric $d_{\tau_j}$ quantifies the independence and diversity of state vectors within trajectory $\tau_j$ using the determinant.

\subsection{Improving Computational Efficiency}
Scaling to high-dimensional environments is crucial for the applicability of deep reinforcement learning algorithms. Traditional approaches often fail due to computational inefficiency, especially when dealing with large state spaces where calculations become difficult and time-consuming. Computing Determinantal Point Processes (DPPs) in high-dimensional state spaces is computationally intensive due to the \textbf{\textit{complexity of calculating}} large kernel matrices. This challenge is particularly acute in extensive state spaces where traditional methods struggle to maintain efficiency. To address this issue, we propose an optimized approach that integrates Cholesky decomposition and rejection sampling into our method. This approach reduces computational costs while preserving the effectiveness of DPPs with theoretical guarantees, making them applicable to complex reinforcement learning scenarios.

\stitle{Cholesky Decomposition}
To simplify the determinant calculation of the kernel matrix, a key operation in DPP, we employ Cholesky decomposition.  For a window length $ b $, given state vectors $\hat{s}_1, \hat{s}_2, \ldots, \hat{s}_b$, we construct the matrix $ M $ as $M = [\hat{s}_1, \hat{s}_2, \ldots, \hat{s}_b]$.
The kernel matrix $ L_{\tau_j} $ is then formed as Equation~(\ref{eq:L_tauj}). To efficiently compute the determinant of $ L_{\tau_j} $, we apply Cholesky decomposition, which decomposes $ L_{\tau_j} $ into a product of a lower triangular matrix $ L_C $ and its transpose $ L_C^T $:

\begin{equation}
L_{\tau_j} = L_C L_C^T
\end{equation}
The determinant is then computed as the product of the squares of the diagonal elements of $ L_C $:

\begin{equation}
\det(L_{\tau_j}) = \prod_{i=1}^b l_{ii}^2
\label{eq:lii}
\end{equation}
Here, $l_{ii}$ denotes the $i$-th diagonal element. 
With Cholesky decomposition, the time complexity of determinant computation for each segment is reduced from $O(n^3)$ to $O(n^2)$.

\stitle{Rejection Sampling} Numerous trajectories are utilized in training, resulting in high sampling inefficiency. To enhance \textbf{\textit{sampling efficiency}}, we introduce rejection sampling. This method effectively filters trajectory segments \emph{before insertion into the replay buffer}, which is particularly useful in high-dimensional state spaces where storing redundant segments incurs significant computational overhead. By prioritizing segments with higher diversity scores, rejection sampling minimizes the retention of less informative segments. Consequently, computational resources are focused on the most diverse and relevant experiences, ensuring that the replay buffer contains the most valuable transitions.

The rejection sampling process is detailed as follows:
First, for each trajectory segment \( \tau_j \), we compute its diversity score \( Q_j \) using Equation~(\ref{eq:d_tau1}) and (\ref{eq:lii}):
\begin{equation}
    Q_j = d_{\tau_j} = \det(L_{\tau_j}) = \prod_{i=1}^b l_{ii}^2
\end{equation}
Next, we determine a normalization constant \( M \), defined as:
\begin{equation}
    M = \max(Q)
\end{equation}
This ensures that for all trajectory segments \( \tau_j \), the acceptance probability \( \alpha = \frac{Q_j}{M} \) remains in the valid range \([0, 1]\).
During the rejection sampling process, we uniformly select a candidate segment \( \tau' \) from the current batch of generated segments. Then we draw a uniformly distributed random number \( u \sim U(0, 1) \). If the sampled number satisfies:
\begin{equation}
    u \leq \alpha = \frac{d_{\tau'}}{M},
\end{equation}
we accept the candidate trajectory segment \( \tau' \); otherwise, we reject it and resample. This process yields a set of diverse trajectory segments, which are then inserted into the replay buffer. Subsequently, training batches are sampled from the buffer according to Equation~(\ref{eq:tau_i}).

\subsection{Time Complexity Analysis}
\begin{theorem}
The time complexity of the EDER algorithm is \( O(N b d + N b^3 + N \log m + m) \) without employing Cholesky decomposition and rejection sampling, and it is reduced to \( O(N b d + N b^2 + N \log m + m) \) after integrating these optimizations. Here, \( N \) denotes the number of state transitions, \( b \) the segment length, \( d \) the dimensionality of the state vectors, and \( m \) the number of top trajectories selected.
\label{thm:2}
\end{theorem}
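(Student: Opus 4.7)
The plan is to decompose EDER's runtime into four stages whose costs match the four terms in the bound, and to argue that Cholesky decomposition affects only the determinant stage while rejection sampling leaves the asymptotic order unchanged. The stages I would isolate are (i) $\ell_2$-normalizing state vectors and assembling each kernel matrix $L_{\tau_j}$ via Equation~(\ref{eq:L_tauj}), (ii) evaluating the diversity score $d_{\tau_j}=\det(L_{\tau_j})$ per segment, (iii) selecting the top $m$ trajectories by score, and (iv) drawing a training batch from the selected pool according to Equation~(\ref{eq:tau_i}).

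For stage (i), I would show that normalizing the $b$ state vectors of a segment and forming the Gram product costs $O(bd)$ work, so aggregating across the $\Theta(N)$ segments induced by the $N$ transitions yields $O(Nbd)$. For stage (ii) without Cholesky, a $b\times b$ determinant by standard LU-type reduction costs $O(b^{3})$ per segment and thus $O(Nb^{3})$ in total; with Cholesky, Equation~(\ref{eq:lii}) recovers the determinant from the squared diagonal entries $\prod_{i} l_{ii}^{2}$ of the factor $L_{C}$, which by the claim in the Cholesky subsection reduces the per-segment cost to $O(b^{2})$ and the stage total to $O(Nb^{2})$. For stages (iii) and (iv), a size-$m$ min-heap extracts the top $m$ scores out of $N$ candidates in $O(N\log m)$ time, and sampling a batch of size $m$ from the normalized distribution in Equation~(\ref{eq:tau_i}) is $O(m)$ after one prefix-sum pass. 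Rejection sampling contributes no extra asymptotic cost: each acceptance test $u\le d_{\tau'}/M$ is $O(1)$ given the precomputed scores and is applied before a segment enters the buffer, so it only shrinks constants. Summing the four stage costs yields $O(Nbd+Nb^{3}+N\log m+m)$ un-optimized and $O(Nbd+Nb^{2}+N\log m+m)$ with Cholesky, as stated.

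The main obstacle I anticipate is the bookkeeping around how the $N$ state transitions decompose into segments of length $b$: depending on whether one counts per-transition, per-segment, or per-sliding-window position, the Gram-matrix and determinant stages can differ by a factor of $b$. Making the aggregation match the stated bound requires committing to one convention --- most naturally treating $N$ as the effective number of segments produced by the windowing scheme --- and verifying that the per-segment linear-algebra costs scale as claimed under that convention. Once this counting is pinned down, the rest is a routine aggregation of standard linear-algebra, heap-selection, and prefix-sum complexities, and the $O(1)$-per-trial nature of rejection sampling then immediately implies that both bounds survive the optimization step.
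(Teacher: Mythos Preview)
Your decomposition into kernel-matrix assembly, determinant evaluation, top-$m$ selection, and batch sampling mirrors the paper's proof almost exactly, including the use of Cholesky to drop the per-segment determinant cost from $O(b^{3})$ to $O(b^{2})$ and the $O(N\log m)$ heap-style selection. Your flagged obstacle is well placed: the paper counts $N/b$ segments but then writes $\tfrac{N}{b}\cdot b^{3}=Nb^{3}$ and $\tfrac{N}{b}\cdot b^{2}=Nb^{2}$, so the stated bound effectively rests on the very convention you identify (treating $N$ as the segment count) rather than on clean arithmetic.
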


The proof is in Appendix~\ref{sec:proofs}. Based on Theorem~\ref{thm:2}, the integration of Cholesky decomposition and rejection sampling significantly reduces the overall computational complexity of the EDER algorithm, especially when dealing with large segment lengths b. This improvement makes the EDER algorithm more efficient and scalable for high-dimensional reinforcement learning tasks, enhancing its applicability in complex environments.

\begin{algorithm}[ht]
\caption{EDER}
\label{algorithm:EDER}
\begin{algorithmic}[1]
    \STATE \textbf{Initialize:} Replay buffer $ \mathcal{D} $, diversity score list $ Q $, segment length $ b $
    \WHILE{not converged}
        \STATE Initialize state $ s_0 $
        \FOR{$ t = 1 $ to $ T $}
            \STATE Select action $ a_t $ via policy $ \pi(s_t, \theta) $
            \STATE Execute $ a_t $, observe $ s_{t+1} $, receive $ r_t $
            \STATE Store $ (s_t, a_t, r_t, s_{t+1}) $ in $ \mathcal{D} $
        \ENDFOR
        \FOR{each trajectory $ \tau $ in $ \mathcal{D} $}
            \STATE Segment $ \tau $ into sub-trajectories $ \tau_j $
            \STATE Compute diversity score $Q_j$ using $\det(L_{\tau_j})$ via Cholesky Decomposition \hfill $\triangleright$ Equation~(\ref{eq:lii})
            \STATE Append $ Q_j $ to $ Q $
        \ENDFOR
        \STATE Set $ M = \max(Q) $
        \FOR{each $ Q_j $ in $ Q $}
            \STATE Calculate acceptance $ \alpha = \frac{Q_j}{M} $
            \STATE Accept corresponding $ \tau_j $ if $ u \leq \alpha $, else discard
        \ENDFOR
        \STATE Sample $ \mathcal{B} \sim \mathcal{D} $ using Eq.~(\ref{eq:tau_i})
        \STATE Update $ \theta $ using $ \mathcal{B} $
    \ENDWHILE
\end{algorithmic}
\end{algorithm}
\vspace{-10pt}

\section{Experiments}

Our experiments aim to rigorously evaluate the performance of the proposed Efficient Diversity-based Experience Replay (EDER) method across multiple environments, focusing on its effectiveness compared to established baseline methods. The experiments are conducted in Mujoco, Atari, and real-life Habitat environments, each selected to highlight different aspects of EDER's capabilities. Detailed environment settings are provided in Appendix~\ref{sec:ex_imple_detail}. Details are available at \href{https://arxiv.org/abs/2410.20487}{https://arxiv.org/abs/2410.20487}.

\stitle{Baselines}
We compare our method against the following baselines. DDPG~\cite{lillicrap2019continuouscontroldeepreinforcement}: a deep reinforcement learning algorithm for continuous action spaces, combining deterministic policy gradients with Q-learning.
DQN~\cite{mnih2013playingatarideepreinforcement}: a widely used algorithm for discrete action spaces, approximating the Q-value function with deep neural networks.
HER~\cite{andrychowicz2017hindsight}: Hindsight Experience Replay enables learning from alternative goals that could have been achieved, improving efficiency in sparse reward settings.
PER~\cite{schaul2015prioritized}: Prioritized Experience Replay enhances learning by prioritizing important transitions.
TER ~\cite{hong2022topological}: Topological Experience Replay builds a graph from experience trajectories to track predecessors, then performs breadth-first updates from terminal states like reverse topological sorting.
LaBER ~\cite{pmlr-v162-lahire22a}: Large Batch Experience Replay samples a large batch from the replay buffer, computes gradient norms, downsamples to a smaller batch based on priority, and uses this mini-batch to update the policy.
Relo ~\cite{sujit2023prioritizing}: Reducible Loss (ReLo) is a sample prioritization method that ranks samples by their learnability, measured by the consistent reduction in their loss over time.

\begin{table}[H]
\centering
\setlength{\tabcolsep}{3.8pt}
\small 
\begin{tabular}{lrrr}
\toprule
\textbf{Methods} & \textbf{Residential} & \textbf{Office} & \textbf{Commercial} \\
\midrule
DDPG          & 9.0 ± 2.5   & 27.5 ± 1.9   & 23.0 ± 2.0   \\
DDPG+HER      & 35.0 ± 2.8   & 42.5 ± 2.1   & 42.0 ± 2.3   \\
DDPG+PER      & 23.0 ± 3.0   & 45.0 ± 2.3   & 34.5 ± 2.4   \\
DDPG+TER      & 48.0 ± 2.1   & 47.0 ± 1.5   & 54.5 ± 3.5   \\
DDPG+LaBER    & 52.0 ± 3.0   & 54.0 ± 2.3   & 48.5 ± 2.1   \\
DDPG+Relo     & 55.0 ± 1.3   & 53.0 ± 2.9   & 59.5 ± 1.8   \\
DDPG+EDER   w/o R.S.   & 58.0 ± 3.1   & 50.8 ± 2.1   & 55.5 ± 3.3   \\
DDPG+EDER   w/o C.D.    & 60.1 ± 2.6   & 68.0 ± 3.9   & 58.5 ± 2.7   \\
DDPG+EDER     & \textbf{64.0 ± 3.3} & \textbf{74.5 ± 2.4} & \textbf{68.4 ± 2.5}   \\
\bottomrule
\vspace{-15pt} 
\end{tabular}
\caption{Success rates (\%) across environments in HM3D.}
\vspace{-10pt}
\label{tab:habitat_tasks}
\end{table}

\begin{table*}[ht]
\setlength{\tabcolsep}{4pt}
\centering
\small
\vspace{-5pt}
\begin{tabular}{lrrrrrrrrrrrr}
\toprule
\textbf{Method} & \textbf{Alien} & \textbf{Asterix} & \textbf{BeamR.} & \textbf{Breakout} & \textbf{CrazyCli.} & \textbf{Demo.}& \textbf{H.E.R.O.} & \textbf{Krull} & \textbf{KungFu.} & \textbf{MsPac.} \\
\midrule
Random &   227.8   & 210.0 & 363.9  & 1.7 & 10,780 & 152.1 & 1,027.0 & 1,598.3 & 258.5 & 307.3  \\
DQN &  3,069.0     & 6,012.0 & 6,846.0  & 401.2 & 14,103.3 & 9,711.0 & 19,950.3 & 3,805.2 & 23,270.3 & 2,311.0  \\
DQN+PER &   4,204.2 & 31,527.3 & 23,384.0  & 373.9 & 141,161.0 & 71,846.7 & 23,038.1 & 9,728.6 & 39,581.2 & 6,519.1\\
DQN+TER  &  4,298.5   & 24,798.5 & 24,432.1  & 420.3 & 142,321.5 & 73,346.2 & 21,543.0 & 9,643.1 & 39,832.9 & 6,587.0 \\
DQN+LaBER & 4,365.2  & 39,172.1 & 23,543.4 & 462.2 & 145,672.2 & 75,128.0 & 24,495.0 & 9,764.7 & 41,823.0 & 6,691.4 \\
DQN+Relo &  4,312.9  & 38,432.4 & 26,064.0 & 492.5 & 144,875.0 & 75,442.1 & \textbf{26,535.3} & 9,734.4 & 41,232.0 & 6,613.1 & \\
\midrule
DQN+EDER  w/o R.S. &  4,292.9  & 44,823.9 & 25,032.0 & 438.8 & 140,274.0 & 74,924.4 & 24,264.3 & 9,374.4 & 40,387.0 & 6,124.1 & \\
DQN+EDER  w/o C.D. &  4,689.4  & 50,283.7 & 25,731.0 & 481.9 & 142,328.6 & 75,326.1 & 25,214.8 & 9,353.4 & 40,983.0 & 6,493.1 & \\
DQN+EDER & \textbf{4,723.1} & \textbf{54,328.5} & \textbf{26,543.0} & \textbf{516.0} & \textbf{147,305.0} & \textbf{76,150.1} & 26,246.0 & \textbf{9,805.0} & \textbf{43,310.0} & \textbf{6,722.2} &\\

\midrule
\midrule

\textbf{Method} & \textbf{Enduro} & \textbf{Freew.} & \textbf{Frost.} & \textbf{Hem} & \textbf{Jamesb.} & \textbf{Kangar.} & \textbf{Pong}& \textbf{Qbert} & \textbf{River.} & \textbf{ZaxxPH.} \\
\midrule
Random  &  0.0     & 0.0    & 65.2    &  1,027.0 &  29.0   & 52.0     & -20.7 & 163.9    &1,338.5    &  32.5           \\
DQN     &  301.8   & 30.3   & 328.3   &  19,950.0&  576.7  & 6,740.0  &  18.9 & 10,596.0 & 8,316.0    & 4,977.0      \\
DQN+PER &  2,093.0 & 33.7   & 4,380.1 & 23,037.7 & 5,148.0 & 16,200.0 & 20.6  & 16,256.5 & 14,522.3 & 10,469.0 \\

DQN+TER  &  2,208.0  & 35.2 & 4,721.3 &  24,332.4 & 5,032.4   & 16,632.0 & 21.0 & 17,281.3 & 19,232.5 & 10,834.0 \\
DQN+LaBER & 2,165.5  & 31.6 & 4,923.5 & 24,251.9  & 5,218.2   & 16,321.0 & 21.0 & 17,744.6 & 21,368.4  & 12,832.0 \\
DQN+Relo &  2,272.2  & 37.6 & 4,892.7 & 25,232.6  & 5,209.8   & \textbf{16,820.1} & 21.0 & 19,013.2 & 22,312.7  & 14,123.0 & \\
\midrule
DQN+EDER  w/o R.S. &  2,138.0  & 32.0 &5,145.4 & 24,214.2 & 5,121.0 & 16,054.2 & 21.0 & 18,421.0 & 21,833.1 & 13,233.1 & \\
DQN+EDER  w/o C.D. &  2,332.7  & 38.1 & 5,483.1 & 25,970.3 & 5,240.1 & 16,192.1 & 21.0 & 19,192.5 & 23,382.0 & 14,523.7 & \\
DQN+EDER & \textbf{2,340.0} & \textbf{39.0} & \textbf{5553.0} & \textbf{26,246.0} & \textbf{5,275.0}  & 16,644.0 & \textbf{21.0} & \textbf{19,545.0} & \textbf{24,425.0} & \textbf{14,920.0}  &\\
\bottomrule 

\end{tabular}
\vspace{-5pt}
\caption{Comparison of Atari Game Scores. Best results are \textbf{bold}.}
\vspace{-10pt}
\label{tab:atari_scores}
\end{table*}
\vspace{-5pt}

\subsection{High-dimensional Environment}
\begin{wrapfigure}{r}{3.9cm}
    \includegraphics[scale=0.10]{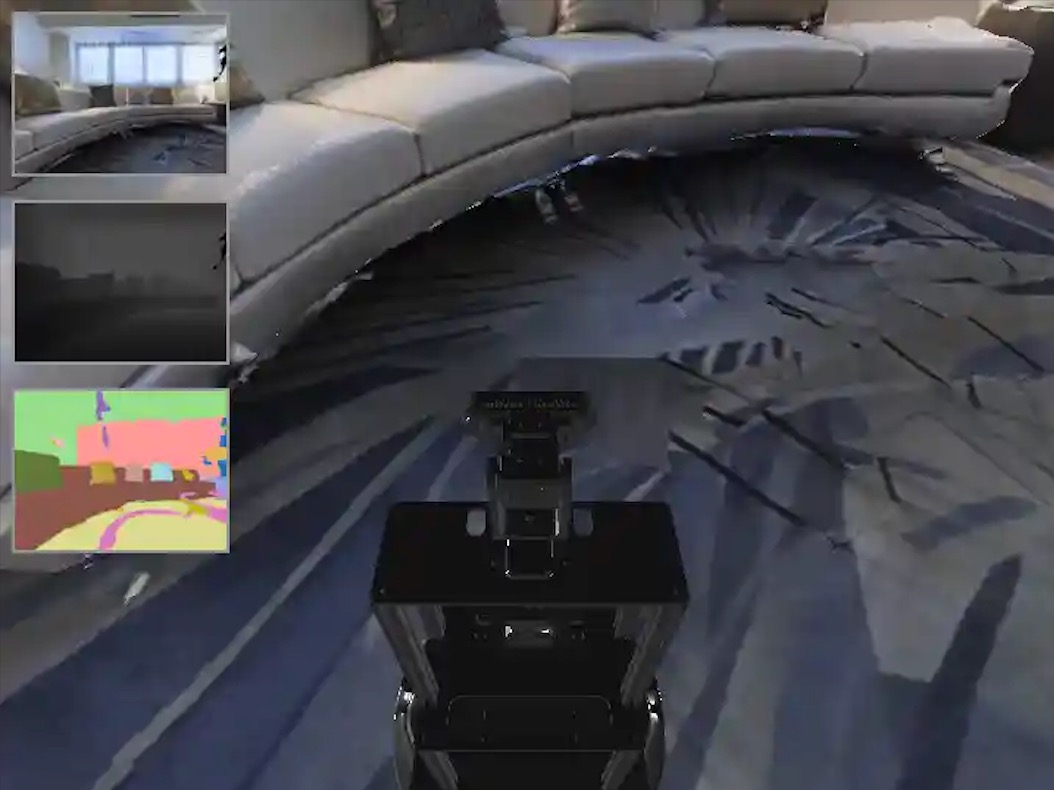}
    \caption{Habitat scene.}
    \label{fig:habitat}
    \vspace{-5pt}
\end{wrapfigure}
We utilize the AI Habitat platform to evaluate EDER’s scalability and effectiveness in vision-based navigation tasks. Specifically, the agent is randomly initialized in the environment and relies solely on its sensory inputs for navigation. With no prior knowledge of the environment map, the agent must autonomously explore the scene and locate the target object. The success metric is defined as whether the agent successfully reaches the target object.
These tasks are conducted in photorealistic 3D environments, where the high-dimensional observation space poses significant challenges for efficient exploration. We evaluate EDER in three environments from the Habitat-Matterport 3D Research Dataset (HM3D)~\cite{ramakrishnan2021habitatmatterport3ddatasethm3d}, representing complex, real-world indoor spaces. Specifically, we choose a residential setting (e.g., living rooms and bedrooms), an office environment (e.g., workspaces and corridors), and a commercial space (e.g., shopping centers), each featuring open areas and diverse visual elements. Target- or topology-based approaches (e.g., HER, TER) incorporate structured global exploration strategies in complex environments to improve efficiency, while loss- or priority-based methods (e.g., PER, LaBER, ReLo) focus on refining transition sampling based on loss or priority rules. However, whether by replacing goals or leveraging topological structures for exploration or by adjusting sampling mechanisms, these methods often neglect the importance of diverse trajectories and comprehensive exploration.
In contrast, EDER emphasizes leveraging diverse trajectories to enhance exploration efficiency. As shown in Table~\ref{tab:habitat_tasks}, EDER consistently achieves higher success rates across all experimental settings in high-dimensional visual tasks, demonstrating its effectiveness and scalability.

\subsection{Atari Games}

The second set of experiments evaluates EDER in discrete-action environments using the Atari benchmark, renowned for its challenging exploration tasks. For instance, in Alien, the agent navigates a maze, earns points by collecting bright spots, and loses a life upon contact with monsters. We test EDER+DQN across various Atari games, comparing it against standard DQN, DQN+PER, and other replay variants. The selected games, including Kangaroo and Jamesbond, are particularly demanding in terms of exploration. As shown in Table~\ref{tab:atari_scores}, EDER achieve the best performance in 18/20 games. Other methods generally fall into two categories: those that emphasize structured global exploration and those that prioritize samples based on local rewards or loss values. While effective in certain scenarios, these methods often overlook transitions that do not immediately yield high TD errors or rewards. In contrast, EDER explicitly incorporates diverse trajectories into its replay mechanism, promoting enhanced exploration and ultimately improving overall performance.

\begin{figure}[!htb]
\centering
\vspace{-5pt}
\subfloat[FetchPickAndPlace]{\includegraphics[width=0.23\textwidth]{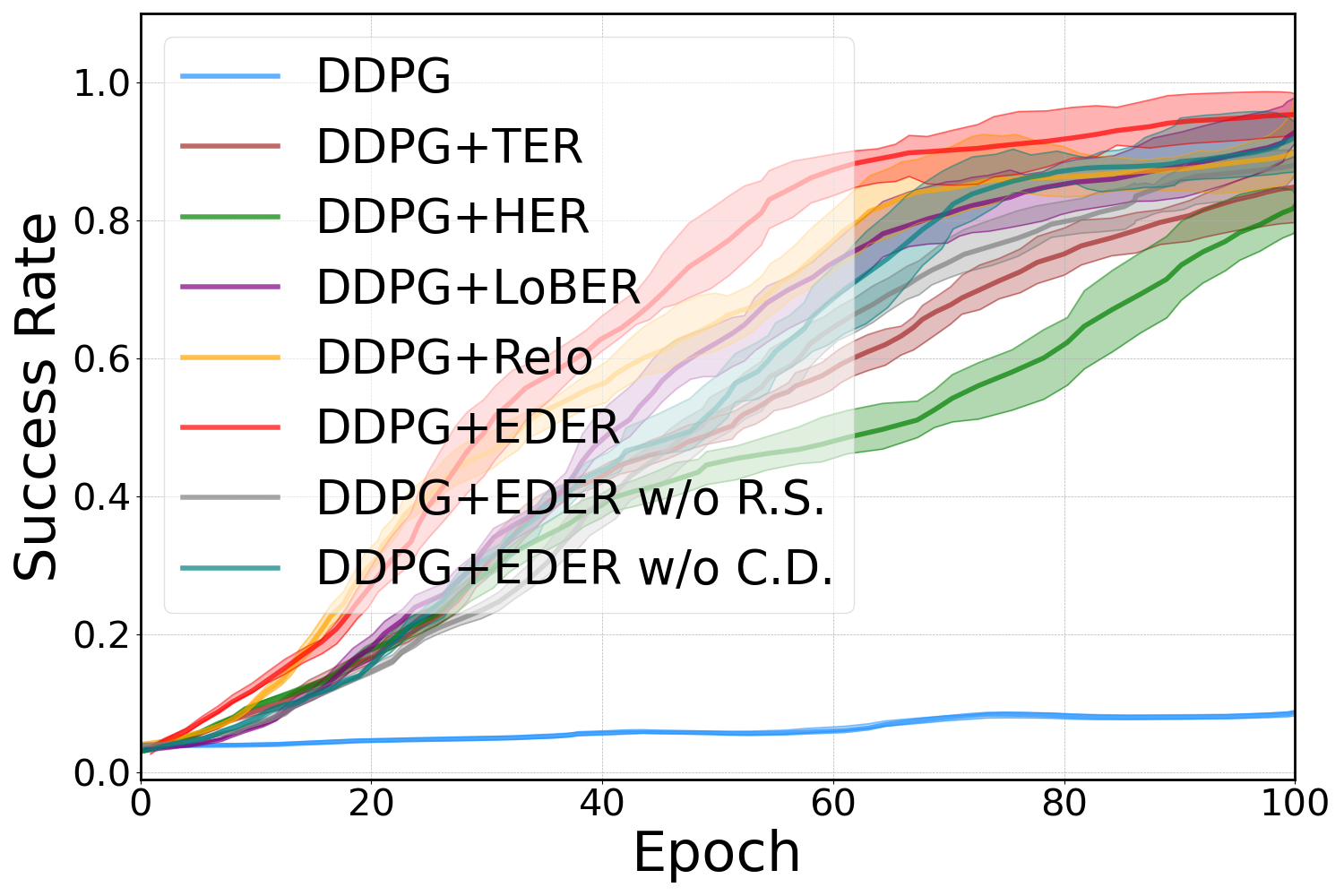}\label{fig:fetch_pick_place}}
\hfill
\subfloat[FetchPush]{\includegraphics[width=0.23\textwidth]{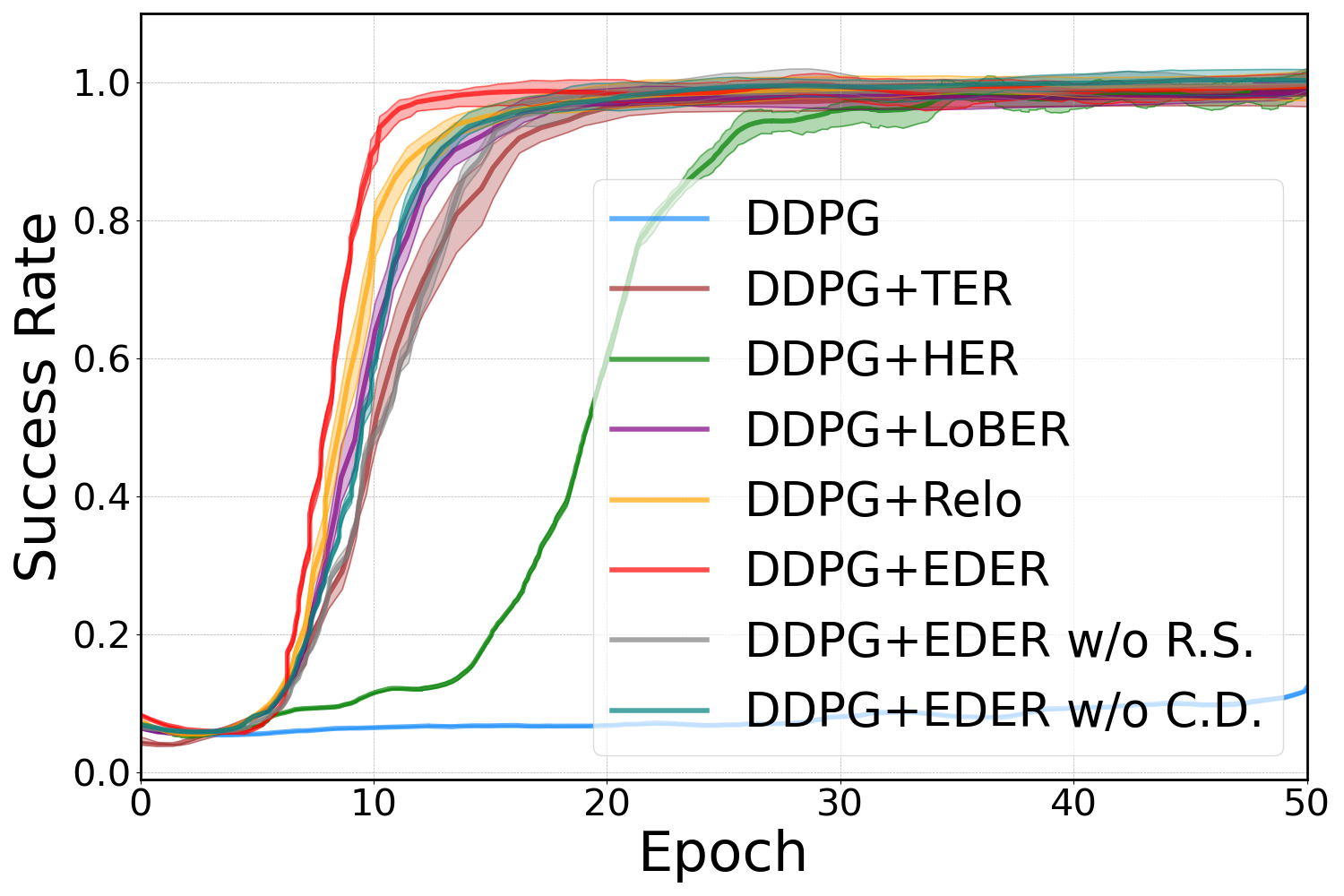}\label{fig:fetch_push}}
\\
\vspace{-5pt}
\subfloat[HandBlockRotate]{\includegraphics[width=0.23\textwidth]{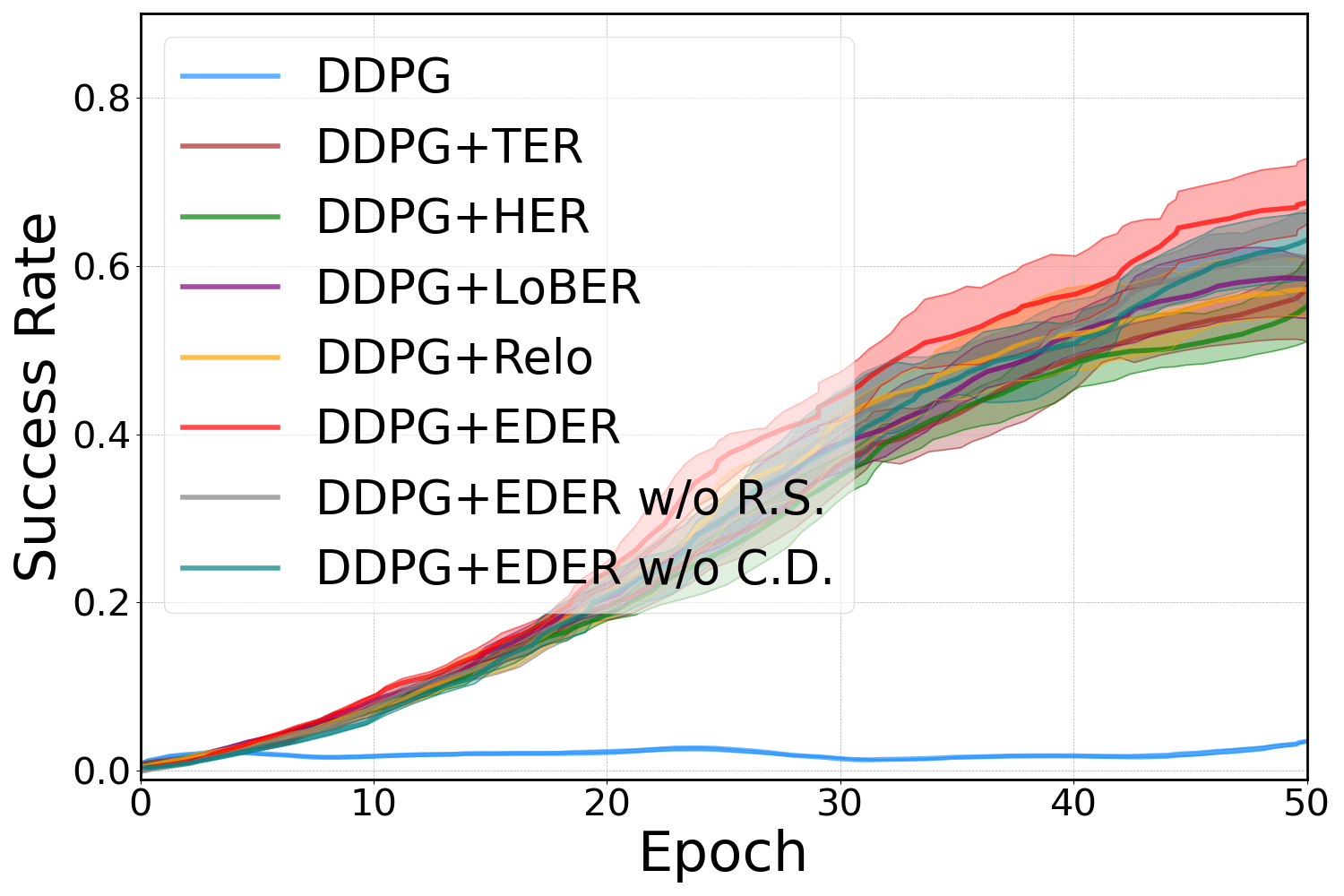}\label{fig:hand_block_rotate}}
\hfill
\subfloat[HandPenRotate]{\includegraphics[width=0.23\textwidth]{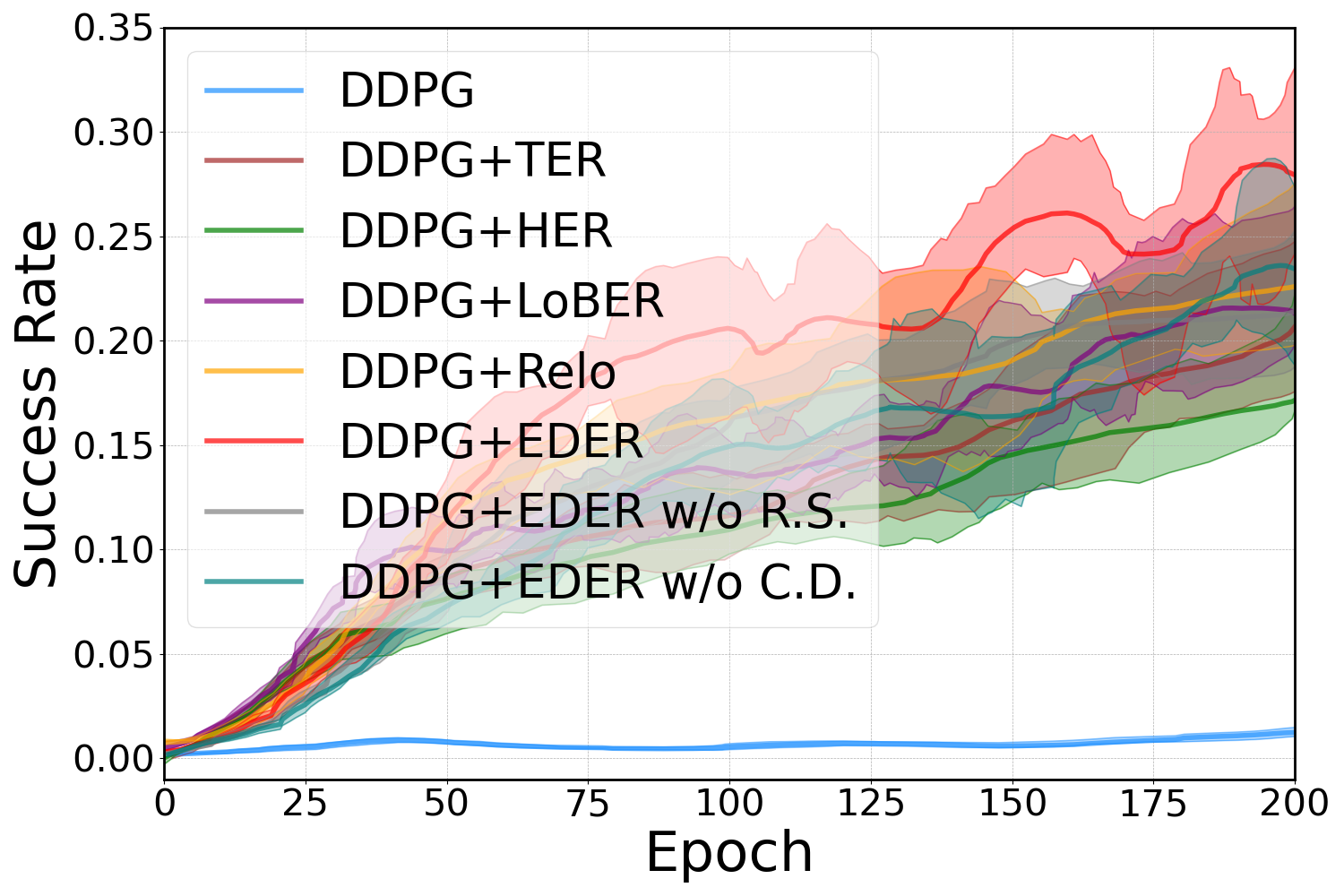}\label{fig:hand_pen_rotate}}
\vspace{-7pt} 
\caption{Success rates between EDER and other baselines}
\vspace{-10pt}
\label{fig:tasks}
\end{figure}
\subsection{Mujoco Tasks}

We also evaluate EDER in MuJoCo environments, focusing on continuous control tasks with sparse rewards. These tasks are particularly challenging due to their high-dimensional state and action spaces. We select four representative tasks from the FetchEnv, which involves a robotic arm with 7 degrees of freedom, and HandEnv, featuring the 24-degree-of-freedom Shadow Dexterous Hand: FetchPickAndPlace, FetchPush, HandBlockRotate, and HandPenRotate.
As shown in Figure~\ref{fig:tasks}, EDER significantly outperforms traditional DDPG and its variants in both learning speed and success rates. Notably, EDER achieves exceptional performance in the Shadow Dexterous Hand task, demonstrating its ability to navigate complex, high-dimensional spaces. This superior performance is attributed to EDER's capacity to enhance exploration through diverse trajectories, resulting in more efficient learning.

\subsection{Ablation Studies}
In our ablation studies, we evaluate the impact of Rejection Sampling (R.S.) and Cholesky Decomposition (C.D.) on the training efficiency of EDER. As shown in Table~\ref{tab:training_time_all}, removing R.S. reduces exploration efficiency, slows convergence, and lowers success rates. In contrast, excluding C.D. leads to increased instability and longer training times, ultimately diminishing learning efficiency.
Moreover, when comparing training times with HER-based baselines, EDER maintains competitive performance, especially in more complex tasks like PickAndPlace, demonstrating favorable trade-offs between computational cost and performance.
Additional experiments varying $m$ (the number of diverse trajectories) and $b$ (the trajectory length) are provided in Appendix~\ref{sec:additional_exp}. Our results indicate that increasing $m$ promotes exploration by sampling more diverse trajectories, helping to avoid local optima. However, excessive $m$ increases computational burden without proportionate performance gains, leading to slower convergence. Likewise, longer segments ($b$) provide richer context and improve exploration in temporally dependent environments, but overly long trajectories may introduce instability and reduce overall training efficiency.

\begin{table}[!htb]
\centering
\small
\setlength{\tabcolsep}{6pt}
\begin{tabular}{llll}
\toprule
\multicolumn{4}{c}{\textbf{PickAndPlace Task (Training Time in Minutes)}} \\
\midrule
\textbf{Method} & \textbf{Time} & \textbf{Method} & \textbf{Time} \\
\midrule
DDPG + HER (Buffer)  & 80.7  & DDPG + LaBER           & 93.6  \\
DDPG + PER (sum-tree) & 63.4  & DDPG + Relo            & 107.1 \\
DDPG + TER           & 91.9  & DDPG + EDER            & 103.1 \\
\midrule
\midrule
\multicolumn{4}{c}{\textbf{Push Task: DDPG + EDER Variants (in Minutes)}} \\
\midrule
\textbf{Method} & \textbf{Time} & \textbf{Method} & \textbf{Time} \\
\midrule
EDER (b=10)             & 124.3  & EDER (b=T)              & 156.0  \\
w/o R.S. (b=10)         & 173.2  & w/o R.S. (b=T)          & 182.7  \\
w/o C.D. (b=10)         & 129.2  & w/o C.D. (b=T)          & 171.3  \\
\bottomrule
\end{tabular}
\vspace{-5pt}
\caption{
Training times (in minutes) for baseline methods and EDER variants on PickAndPlace and Push tasks. R.S.: Rejection Sampling; C.D.: Cholesky Decomposition.
}
\label{tab:training_time_all}
\vspace{-5pt}
\end{table}

\section{Related Work}
The concept of Experience Replay (ER) was first introduced by~\cite{lin1992self}, where past experiences are stored in a buffer and replayed during training to break the correlation between sequential data, which helps mitigate the non-stationarity in RL. \cite{mnih2013playingatarideepreinforcement} later incorporated ER into the Deep Q-Network (DQN), where the use of randomly sampled batches from the replay buffer was crucial in stabilizing the learning process and led to significant advancements in the performance of RL algorithms.

Prioritized Experience Replay (PER)~\cite{schaul2015prioritized} enhances learning by focusing on high TD-error samples and prioritizing informative experiences. Various extensions to PER have been proposed, such as the actor-critic-based PER~\cite{saglam2022actor}, which dynamically adjusts sampling priorities to balance exploration and exploitation; Attentive PER~\cite{sun2020attentive} uses attention mechanisms to replay experiences relevant to the current learning phase, enhancing training efficiency. Additionally, recent studies have introduced new priority criteria to enhance PER's effectiveness. Relo~\cite{sujit2023prioritizing} define the learnability of transitions as a priority criterion, prioritizing samples that consistently reduce training loss.TER~\cite{hong2022topological} builds a trajectory graph and prioritizes updates breadth-first from terminal states; LaBER~\cite{pmlr-v162-lahire22a} enhances efficiency by leveraging large batch sampling with focused updates. FSER~\cite{yu2024mixed} combines frequency and similarity indices to prioritize rare and policy-aligned experiences. ~\cite{wei2021deep} integrates transition revisit frequency with TD error for more effective replay buffer prioritization.

Hindsight Experience Replay (HER)~\cite{andrychowicz2017hindsight}, offers a novel approach to handling sparse rewards by retrospectively altering the goals of unsuccessful episodes, thereby converting failures into valuable learning experiences. HER has been integrated with techniques such as curriculum learning~\cite{fang2019curriculum} and multi-goal learning~\cite{zhou2019multi} to enhance the generalization and adaptability of RL agents. Additionally, Contact Energy Based Prioritization (CEBP)~\cite{sayar2024contact} prioritizes replay samples based on contact-rich interactions, selecting the most informative experiences. Distributed ER architectures like Ape-X~\cite{horgan2018distributed} and IMPALA~\cite{espeholt2018impala} have scaled experience replay across multiple actors, significantly accelerating training while maintaining efficiency. Relay Hindsight Experience Replay (RHER)~\cite{luo2023relay} decomposes tasks and employs a multi-goal network for self-guided exploration. Hybrid approaches have also been explored, such as combining Prioritized Experience Replay (PER) and HER~\cite{zhang2017deeper}, as well as introducing adaptive replay strategies~\cite{peng2019advantage}, adjusting priorities based on learning progress. These advancements enhance the robustness and scalability of experience replay methods, enabling more efficient and effective learning across a wide range of reinforcement learning tasks.

\section{Conclusion}

In this work, we present the Efficient Diversity-based Experience Replay (EDER) framework, which prioritizes sample diversity to significantly enhance the efficiency of experience replay (ER), particularly in high-dimensional state spaces and environments with sparse rewards. To address computational bottlenecks in large state spaces, we integrate Cholesky decomposition and rejection sampling, enabling the selection of more diverse and representative samples while optimizing the ER mechanism.
Extensive experiments on MuJoCo, Atari games, and Habitat demonstrate the superiority of EDER compared to existing approaches. EDER not only substantially improves learning efficiency but also delivers superior performance in high-dimensional and realistic environments. These results validate the effectiveness and adaptability of EDER across a variety of complex settings.

\section*{Acknowledgments}

This work was supported in part by the Key Research and Development Project of Hubei Province (2022BCA057), the Science and Technology Development Fund Macau SAR (0003/2023/RIC, 0052/2023/RIA1, 0031/2022/A, 001/2024/SKL for SKL-IOTSC), the Shenzhen-Hong Kong-Macau Science and Technology Program Category C (SGDX20230821095159012), the National Natural Science Foundation of China (62402325), and the Research Foundation of Shenzhen Polytechnic University (6022310014K, 6022312054K). This work was performed in part on the supercomputing system at the Supercomputing Center of Wuhan University and in part at SICC which is supported by SKL-IOTSC, University of Macau.

\bibliographystyle{named}
\bibliography{ijcai25}

\begin{thebibliography}{}

\bibitem[\protect\citeauthoryear{Andrychowicz \bgroup \em et al.\egroup }{2017}]{andrychowicz2017hindsight}
Marcin Andrychowicz, Filip Wolski, Alex Ray, Jonas Schneider, Rachel Fong, Peter Welinder, Bob McGrew, Joshua Tobin, Pieter Abbeel, and Wojciech Zaremba.
\newblock Hindsight experience replay.
\newblock In {\em Neural Information Processing Systems}, 2017.

\bibitem[\protect\citeauthoryear{Andrychowicz \bgroup \em et al.\egroup }{2020}]{andrychowicz2020learning}
Marcin Andrychowicz, Bob Baker, Marcin Chociej, Rafal Jozefowicz, Brian McGrew, Jakub Pachocki, Andrew Petron, Matthias Plappert, Geoffrey Powell, Alex Ray, et~al.
\newblock Learning dexterous in-hand manipulation.
\newblock {\em International Journal of Robotics Research}, 2020.

\bibitem[\protect\citeauthoryear{Azadi \bgroup \em et al.\egroup }{2018}]{azadi2018discriminator}
Samaneh Azadi, Catherine Olsson, Trevor Darrell, Ian Goodfellow, and Augustus Odena.
\newblock Discriminator rejection sampling.
\newblock {\em arXiv preprint arXiv:1810.06758}, 2018.

\bibitem[\protect\citeauthoryear{Espeholt \bgroup \em et al.\egroup }{2018}]{espeholt2018impala}
Lasse Espeholt, Hubert Soyer, Remi Munos, Karen Simonyan, Volodymyr Mnih, Tom Ward, and Martin Riedmiller.
\newblock Impala: Scalable distributed deep-rl with importance weighted actor-learner architectures.
\newblock In {\em Proceedings of the 35th International Conference on Machine Learning}, pages 1407--1416. PMLR, 2018.

\bibitem[\protect\citeauthoryear{Fang \bgroup \em et al.\egroup }{2019}]{fang2019curriculum}
Minshu Fang, Tao Zhou, Yifeng Du, Lei Han, and Zhe Zhang.
\newblock Curriculum-guided hindsight experience replay.
\newblock In {\em Advances in Neural Information Processing Systems}, volume~32, 2019.

\bibitem[\protect\citeauthoryear{Fran{\c{c}}ois-Lavet \bgroup \em et al.\egroup }{2018}]{franccois2018introduction}
Vincent Fran{\c{c}}ois-Lavet, Peter Henderson, Riashat Islam, Marc~G Bellemare, Joelle Pineau, et~al.
\newblock An introduction to deep reinforcement learning.
\newblock {\em Foundations and Trends{\textregistered} in Machine Learning}, 11(3-4):219--354, 2018.

\bibitem[\protect\citeauthoryear{Hare}{2019}]{hare2019dealing}
Joshua Hare.
\newblock Dealing with sparse rewards in reinforcement learning.
\newblock {\em arXiv preprint arXiv:1910.09281}, 2019.

\bibitem[\protect\citeauthoryear{Hong \bgroup \em et al.\egroup }{2022}]{hong2022topological}
Zhang-Wei Hong, Tao Chen, Yen-Chen Lin, Joni Pajarinen, and Pulkit Agrawal.
\newblock Topological experience replay.
\newblock {\em arXiv preprint arXiv:2203.15845}, 2022.

\bibitem[\protect\citeauthoryear{Horgan \bgroup \em et al.\egroup }{2018}]{horgan2018distributed}
Dan Horgan, John Quan, David Budden, Gabriel Barth-Maron, Matteo Hessel, Hado Van~Hasselt, and David Silver.
\newblock Distributed prioritized experience replay.
\newblock {\em arXiv preprint arXiv:1803.00933}, 2018.

\bibitem[\protect\citeauthoryear{Ibrahimi \bgroup \em et al.\egroup }{2012}]{ibrahimi2012efficient}
Morteza Ibrahimi, Adel Javanmard, and Benjamin Roy.
\newblock Efficient reinforcement learning for high dimensional linear quadratic systems.
\newblock {\em Advances in Neural Information Processing Systems}, 25, 2012.

\bibitem[\protect\citeauthoryear{Jiang \bgroup \em et al.\egroup }{2024}]{jiang2024importance}
Yiding Jiang, J~Zico Kolter, and Roberta Raileanu.
\newblock On the importance of exploration for generalization in reinforcement learning.
\newblock In {\em Advances in Neural Information Processing Systems 36}, 2024.

\bibitem[\protect\citeauthoryear{Krishnamoorthy and Menon}{2013}]{krishnamoorthy2013matrixinversionusingcholesky}
Aravindh Krishnamoorthy and Deepak Menon.
\newblock Matrix inversion using cholesky decomposition, 2013.

\bibitem[\protect\citeauthoryear{Kulesza \bgroup \em et al.\egroup }{2012}]{kulesza2012determinantal}
Alex Kulesza, Ben Taskar, et~al.
\newblock Determinantal point processes for machine learning.
\newblock {\em Foundations and Trends{\textregistered} in Machine Learning}, 2012.

\bibitem[\protect\citeauthoryear{Kunaver and Po{\v{z}}rl}{2017}]{kunaver2017diversity}
Matev{\v{z}} Kunaver and Toma{\v{z}} Po{\v{z}}rl.
\newblock Diversity in recommender systems--a survey.
\newblock {\em Knowledge-based systems}, 123:154--162, 2017.

\bibitem[\protect\citeauthoryear{Lahire \bgroup \em et al.\egroup }{2022}]{pmlr-v162-lahire22a}
Thibault Lahire, Matthieu Geist, and Emmanuel Rachelson.
\newblock Large batch experience replay.
\newblock In Kamalika Chaudhuri, Stefanie Jegelka, Le~Song, Csaba Szepesvari, Gang Niu, and Sivan Sabato, editors, {\em Proceedings of the 39th International Conference on Machine Learning}, volume 162 of {\em Proceedings of Machine Learning Research}, pages 11790--11813. PMLR, 17--23 Jul 2022.

\bibitem[\protect\citeauthoryear{Levine \bgroup \em et al.\egroup }{2016}]{levine2016end}
Sergey Levine, Chelsea Finn, Trevor Darrell, and Pieter Abbeel.
\newblock End-to-end training of deep visuomotor policies.
\newblock {\em Journal of Machine Learning Research}, 2016.

\bibitem[\protect\citeauthoryear{Lillicrap \bgroup \em et al.\egroup }{2019}]{lillicrap2019continuouscontroldeepreinforcement}
Timothy~P. Lillicrap, Jonathan~J. Hunt, Alexander Pritzel, Nicolas Heess, Tom Erez, Yuval Tassa, David Silver, and Daan Wierstra.
\newblock Continuous control with deep reinforcement learning, 2019.

\bibitem[\protect\citeauthoryear{Lin}{1992}]{lin1992self}
Long-Ji Lin.
\newblock Self-improving reactive agents based on reinforcement learning, planning and teaching.
\newblock {\em Machine Learning}, 8(3-4):293--321, 1992.

\bibitem[\protect\citeauthoryear{Luo \bgroup \em et al.\egroup }{2023}]{luo2023relay}
Yongle Luo, Yuxin Wang, Kun Dong, Qiang Zhang, Erkang Cheng, Zhiyong Sun, and Bo~Song.
\newblock Relay hindsight experience replay: Self-guided continual reinforcement learning for sequential object manipulation tasks with sparse rewards.
\newblock {\em Neurocomputing}, page 126620, 2023.

\bibitem[\protect\citeauthoryear{Mnih \bgroup \em et al.\egroup }{2013}]{mnih2013playingatarideepreinforcement}
Volodymyr Mnih, Koray Kavukcuoglu, David Silver, Alex Graves, Ioannis Antonoglou, Daan Wierstra, and Martin Riedmiller.
\newblock Playing atari with deep reinforcement learning, 2013.

\bibitem[\protect\citeauthoryear{Mnih}{2013}]{mnih2013playing}
Volodymyr Mnih.
\newblock Playing atari with deep reinforcement learning.
\newblock {\em arXiv preprint arXiv:1312.5602}, 2013.

\bibitem[\protect\citeauthoryear{Neal}{2003}]{neal2003slice}
Radford~M Neal.
\newblock Slice sampling.
\newblock {\em The annals of statistics}, 31(3):705--767, 2003.

\bibitem[\protect\citeauthoryear{Peng \bgroup \em et al.\egroup }{2019}]{peng2019advantage}
Xue~Bin Peng, Aviral Kumar, Grace Zhang, and Sergey Levine.
\newblock Advantage-weighted regression: Simple and scalable off-policy reinforcement learning, 2019.

\bibitem[\protect\citeauthoryear{Puig \bgroup \em et al.\egroup }{2023}]{puig2023habitat3}
Xavi Puig, Eric Undersander, Andrew Szot, Mikael~Dallaire Cote, Ruslan Partsey, Jimmy Yang, Ruta Desai, Alexander~William Clegg, Michal Hlavac, Tiffany Min, Theo Gervet, Vladimir Vondrus, Vincent-Pierre Berges, John Turner, Oleksandr Maksymets, Zsolt Kira, Mrinal Kalakrishnan, Jitendra Malik, Devendra~Singh Chaplot, Unnat Jain, Dhruv Batra, Akshara Rai, and Roozbeh Mottaghi.
\newblock Habitat 3.0: A co-habitat for humans, avatars and robots, 2023.

\bibitem[\protect\citeauthoryear{Ramakrishnan \bgroup \em et al.\egroup }{2021}]{ramakrishnan2021habitatmatterport3ddatasethm3d}
Santhosh~K. Ramakrishnan, Aaron Gokaslan, Erik Wijmans, Oleksandr Maksymets, Alex Clegg, John Turner, Eric Undersander, Wojciech Galuba, Andrew Westbury, Angel~X. Chang, Manolis Savva, Yili Zhao, and Dhruv Batra.
\newblock Habitat-matterport 3d dataset (hm3d): 1000 large-scale 3d environments for embodied ai, 2021.

\bibitem[\protect\citeauthoryear{Saglam \bgroup \em et al.\egroup }{2022}]{saglam2022actor}
Berkay Saglam, Fatih~Burak Mutlu, Dilek~Cetin Cicek, et~al.
\newblock Actor prioritized experience replay.
\newblock {\em arXiv preprint arXiv:2209.00532}, 2022.

\bibitem[\protect\citeauthoryear{Savva \bgroup \em et al.\egroup }{2019}]{habitat19iccv}
Manolis Savva, Abhishek Kadian, Oleksandr Maksymets, Yili Zhao, Erik Wijmans, Bhavana Jain, Julian Straub, Jia Liu, Vladlen Koltun, Jitendra Malik, Devi Parikh, and Dhruv Batra.
\newblock Habitat: A platform for embodied ai research.
\newblock In {\em Proceedings of the IEEE/CVF International Conference on Computer Vision (ICCV)}, 2019.

\bibitem[\protect\citeauthoryear{Sayar \bgroup \em et al.\egroup }{2024}]{sayar2024contact}
Erdi Sayar, Zhenshan Bing, Carlo D’Eramo, Ozgur~S Oguz, and Alois Knoll.
\newblock Contact energy based hindsight experience prioritization.
\newblock In {\em 2024 IEEE International Conference on Robotics and Automation (ICRA)}, pages 5434--5440. IEEE, 2024.

\bibitem[\protect\citeauthoryear{Schaul \bgroup \em et al.\egroup }{2015}]{schaul2015prioritized}
Tom Schaul, John Quan, Ioannis Antonoglou, et~al.
\newblock Prioritized experience replay.
\newblock {\em arXiv preprint arXiv:1511.05952}, 2015.

\bibitem[\protect\citeauthoryear{Schrittwieser \bgroup \em et al.\egroup }{2020}]{schrittwieser2020mastering}
Julian Schrittwieser, Ioannis Antonoglou, Thomas Hubert, Karen Simonyan, Laurent Sifre, Simon Schmitt, Arthur Guez, Edward Lockhart, Demis Hassabis, and Thore Graepel.
\newblock Mastering atari, go, chess and shogi by planning with a learned model.
\newblock {\em Nature}, 2020.

\bibitem[\protect\citeauthoryear{Silver \bgroup \em et al.\egroup }{2017}]{silver2017mastering}
David Silver, Julian Schrittwieser, Karen Simonyan, Ioannis Antonoglou, Aja Huang, Arthur Guez, Thomas Hubert, Lucas Baker, Matthew Lai, Adrian Bolton, et~al.
\newblock Mastering the game of go without human knowledge.
\newblock {\em Nature}, 2017.

\bibitem[\protect\citeauthoryear{Sujit \bgroup \em et al.\egroup }{2023}]{sujit2023prioritizing}
Shivakanth Sujit, Somjit Nath, Pedro Braga, and Samira Ebrahimi~Kahou.
\newblock Prioritizing samples in reinforcement learning with reducible loss.
\newblock {\em Advances in Neural Information Processing Systems}, 36:23237--23258, 2023.

\bibitem[\protect\citeauthoryear{Sun \bgroup \em et al.\egroup }{2020}]{sun2020attentive}
Peiquan Sun, Wengang Zhou, and Houqiang Li.
\newblock Attentive experience replay.
\newblock In {\em Proceedings of the AAAI Conference on Artificial Intelligence}, volume~34, pages 5900--5907, 2020.

\bibitem[\protect\citeauthoryear{Todorov \bgroup \em et al.\egroup }{2012}]{todorov2012mujoco}
Emanuel Todorov, Tom Erez, and Yuval Tassa.
\newblock Mujoco: A physics engine for model-based control.
\newblock In {\em 2012 IEEE/RSJ International Conference on Intelligent Robots and Systems}. IEEE, 2012.

\bibitem[\protect\citeauthoryear{Wang \bgroup \em et al.\egroup }{2024a}]{yiming1}
Yiming Wang, Ming Yang, Renzhi Dong, Binbin Sun, Furui Liu, et~al.
\newblock Efficient potential-based exploration in reinforcement learning using inverse dynamic bisimulation metric.
\newblock {\em Advances in Neural Information Processing Systems}, 36, 2024.

\bibitem[\protect\citeauthoryear{Wang \bgroup \em et al.\egroup }{2024b}]{yiming2}
Yiming Wang, Kaiyan Zhao, Furui Liu, et~al.
\newblock Rethinking exploration in reinforcement learning with effective metric-based exploration bonus.
\newblock In {\em The Thirty-eighth Annual Conference on Neural Information Processing Systems}, 2024.

\bibitem[\protect\citeauthoryear{Wei \bgroup \em et al.\egroup }{2021}]{wei2021deep}
Qing Wei, Hailan Ma, Chunlin Chen, and Daoyi Dong.
\newblock Deep reinforcement learning with quantum-inspired experience replay.
\newblock {\em IEEE Transactions on Cybernetics}, 52(9):9326--9338, 2021.

\bibitem[\protect\citeauthoryear{Yang \bgroup \em et al.\egroup }{2023a}]{yangming3}
Ming Yang, Renzhi Dong, Yiming Wang, Furui Liu, Yali Du, Mingliang Zhou, and Leong Hou~U.
\newblock Tiecomm: Learning a hierarchical communication topology based on tie theory.
\newblock In {\em International Conference on Database Systems for Advanced Applications}, pages 604--613. Springer, 2023.

\bibitem[\protect\citeauthoryear{Yang \bgroup \em et al.\egroup }{2023b}]{yangming1}
Ming Yang, Yiming Wang, Yang Yu, Mingliang Zhou, et~al.
\newblock Mixlight: Mixed-agent cooperative reinforcement learning for traffic light control.
\newblock {\em IEEE Transactions on Industrial Informatics}, 2023.

\bibitem[\protect\citeauthoryear{Yang \bgroup \em et al.\egroup }{2024}]{yangming2}
Ming Yang, Kaiyan Zhao, Yiming Wang, Renzhi Dong, Yali Du, Furui Liu, Mingliang Zhou, and Leong~Hou U.
\newblock Team-wise effective communication in multi-agent reinforcement learning.
\newblock {\em Autonomous Agents and Multi-Agent Systems}, 38(2):36, 2024.

\bibitem[\protect\citeauthoryear{Yarats \bgroup \em et al.\egroup }{2021}]{yarats2021improving}
Denis Yarats, Amy Zhang, Ilya Kostrikov, Brandon Amos, Joelle Pineau, and Rob Fergus.
\newblock Improving sample efficiency in model-free reinforcement learning from images.
\newblock In {\em Proceedings of the aaai conference on artificial intelligence}, volume~35, pages 10674--10681, 2021.

\bibitem[\protect\citeauthoryear{Yu \bgroup \em et al.\egroup }{2024}]{yu2024mixed}
Jiayu Yu, Jingyao Li, Shuai L{\"u}, and Shuai Han.
\newblock Mixed experience sampling for off-policy reinforcement learning.
\newblock {\em Expert Systems with Applications}, 251:124017, 2024.

\bibitem[\protect\citeauthoryear{Zhang \bgroup \em et al.\egroup }{2017}]{zhang2017deeper}
Jianan Zhang, Jost~Tobias Springenberg, Joschka Boedecker, and Wolfram Burgard.
\newblock Deep reinforcement learning with successor features for navigation across similar environments.
\newblock In {\em Proceedings of the IEEE International Conference on Intelligent Robots and Systems (IROS)}, pages 5276--5281. IEEE, 2017.

\bibitem[\protect\citeauthoryear{Zhao and Tresp}{2018}]{zhao2018energy}
Rui Zhao and Volker Tresp.
\newblock Energy-based hindsight experience prioritization.
\newblock In {\em Conference on Robot Learning}. PMLR, 2018.

\bibitem[\protect\citeauthoryear{Zhou \bgroup \em et al.\egroup }{2019}]{zhou2019multi}
Hongyao Zhou, Peng Zhang, Zhanxing Qiu, Haifeng He, and Wei Zhang.
\newblock Multi-goal reinforcement learning: Learning to learn and exploring within a complex and sparse environment.
\newblock {\em arXiv preprint arXiv:1902.06899}, 2019.

\end{thebibliography}

\newpage
\setcounter{theorem}{0}
\setcounter{proposition}{0}
\appendix
\onecolumn
\appendix

\section{Proofs}
\label{sec:proofs}

\begin{theorem}[Correlation between Determinant and Diversity]
Let $M \in \mathbb{R}^{d \times b}$ be a matrix whose columns are the $\ell_2$-normalized state vectors $\hat{s}$ in trajectory $\tau_j$. The determinant $\det(L_{\tau_j})$ of the kernel matrix $L_{\tau_j} = M^T M$ reaches its maximum value when the state vectors are mutually orthogonal, indicating the highest diversity of the trajectory.
\end{theorem}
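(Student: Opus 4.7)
The plan is to recognize the statement as a direct consequence of Hadamard's inequality applied to the positive semi-definite Gram matrix $L_{\tau_j} = M^T M$, combined with the fact that the $\ell_2$-normalization forces every diagonal entry of $L_{\tau_j}$ to equal $1$.

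First I would observe that the $(i,i)$-th entry of $L_{\tau_j}$ is $\hat{s}_i^T \hat{s}_i = \|\hat{s}_i\|_2^2 = 1$, since the columns of $M$ are unit vectors by construction. Next I would invoke Hadamard's inequality in its form for positive semi-definite matrices: for any symmetric positive semi-definite matrix $A \in \mathbb{R}^{b \times b}$,
\begin{equation*}
\det(A) \leq \prod_{i=1}^{b} A_{ii},
\end{equation*}
with equality if and only if $A$ is diagonal (or some $A_{ii} = 0$). Applied to $A = L_{\tau_j}$, this yields $\det(L_{\tau_j}) \leq 1$, with equality precisely when all off-diagonal entries $\hat{s}_i^T \hat{s}_k$ ($i \neq k$) vanish, i.e.\ when the columns of $M$ are pairwise orthogonal.

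To close the proof I would translate the geometric meaning back into the DPP framework: $\det(L_{\tau_j})^{1/2}$ equals the $b$-dimensional volume of the parallelepiped spanned by the columns of $M$, and among families of $b$ unit vectors in $\mathbb{R}^d$ this volume is provably maximized at the orthogonal configuration, which matches the intuition that orthogonal features are maximally diverse. I would also note the degenerate case $d < b$, in which mutual orthogonality is impossible and $\det(L_{\tau_j}) = 0$ (since $\mathrm{rank}(M) \leq d < b$), so the bound is vacuous but consistent.

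The main obstacle is essentially citational rather than technical: I need to justify (or briefly prove in an appendix) Hadamard's inequality for PSD matrices, which can be done either via the AM–GM inequality applied to eigenvalues after normalizing rows, or by an inductive Schur-complement argument. The equality condition is the only subtle part, but it follows cleanly from either derivation because equality in AM–GM forces all eigenvalues of the correlation matrix to be equal, and combined with $\mathrm{tr}(L_{\tau_j}) = b$ this pins them to $1$, i.e.\ $L_{\tau_j} = I$.
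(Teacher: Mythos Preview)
Your proof via Hadamard's inequality is correct and essentially self-contained. The paper argues differently: after the same geometric volume remark, it writes $\det(L_{\tau_j}) = \prod_{i=1}^b \sigma_i^2$ in terms of the singular values of $M$ and then asserts that unit-norm columns force each $\sigma_i \le 1$, so the product is at most $1$. That per-singular-value bound is not actually true in general (if all $b$ columns equal the same unit vector, the top singular value is $\sqrt{b}$); what unit columns do guarantee is $\sum_i \sigma_i^2 = \|M\|_F^2 = b$, from which $\prod_i \sigma_i^2 \le 1$ follows by AM--GM rather than term-by-term. Your Hadamard route packages exactly this AM--GM step as a standard citable inequality on the Gram matrix and delivers the equality case ($L_{\tau_j}$ diagonal, i.e.\ $\hat{s}_i^\top \hat{s}_k = 0$ for $i \neq k$) without extra work, so it is both tighter in its reasoning and easier to justify. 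The degenerate observation for $d < b$ that you include is a nice addition the paper omits.
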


\begin{proof}
\textbf{(Geometric Interpretation)}  
The determinant $\det(L_{\tau_j})$ represents the squared volume of the parallelepiped spanned by the columns of $M$. Specifically,
$$
\mathrm{Vol}_b(M) = \sqrt{\det(M^\top M)} = \sqrt{\det(L_{\tau_j})},
$$
so 
$$
\det(L_{\tau_j}) = \bigl[\mathrm{Vol}_b(M)\bigr]^2.
$$
Maximizing $\det(L_{\tau_j})$ corresponds to maximizing this volume, which is achieved when the columns of $M$ are orthogonal.

\medskip
\noindent
\textbf{(Algebraic Perspective)}  
If the columns of $M$ are orthonormal, then
$$
M^\top M = I_b \quad \Rightarrow \quad \det(L_{\tau_j}) = \det(I_b) = 1.
$$
Here, $I_b$ is the $b \times b$ identity matrix, and its determinant is 1.

Now, consider what happens when the columns of $M$ are not orthonormal. Suppose there exists at least one pair of columns that are not orthogonal. Let $\sigma_1, \sigma_2, \ldots, \sigma_b$ be the singular values of $M$. The determinant of $L_{\tau_j}$ can be expressed in terms of these singular values:
$$
\det(L_{\tau_j}) = \prod_{i=1}^b \sigma_i^2.
$$
Since the columns are $\ell_2$-normalized, the singular values satisfy $0 \leq \sigma_i \leq 1$. If all columns are orthonormal, then $\sigma_i = 1$ for all $i$, and hence $\det(L_{\tau_j}) = 1$. However, if any $\sigma_i < 1$, the determinant becomes
$$
\det(L_{\tau_j}) = \prod_{i=1}^b \sigma_i^2 < 1.
$$
This shows that any deviation from orthonormality reduces the determinant.

\medskip
\noindent

\textbf{Conclusion:}  
The determinant $\det(L_{\tau_j})$ is maximized when the columns of $M$ are orthonormal, achieving the maximum possible value of 1, which corresponds to the highest diversity among the state vectors.
$$
\boxed{
  \max\,\det(L_{\tau_j}) = 1 \quad \text{when columns of } M \text{ are orthonormal.}
}
$$
\end{proof}

\begin{theorem}[Time Complexity of EDER]
\label{theorem2}
The time complexity of the EDER algorithm is $ O(N b d + N b^3 + N \log m + m) $ without employing Cholesky decomposition and rejection sampling, and it is reduced to $ O(N b d + N b^2 + N \log m + m) $ after integrating these optimizations. Here, $ N $ denotes the number of state transitions, $ b $ the segment length, $ d $ the dimensionality of the state vectors, and $ m $ the number of top trajectories selected.
\end{theorem}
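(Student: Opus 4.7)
The plan is to decompose Algorithm~\ref{algorithm:EDER} into its computational phases---preprocessing, kernel construction, determinant computation, top-$m$ selection, and rejection sampling---bound the cost of each phase in terms of $N$, $b$, $d$, and $m$, and then sum the contributions. The two expressions in the statement will differ only in the determinant-computation phase, where Cholesky decomposition replaces a generic determinant routine.

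For the preprocessing and kernel-matrix phase, I would note that across all segments of length $b$, the combined work of $\ell_2$-normalizing the state vectors and forming each Gram matrix $L_{\tau_j} = M^T M$ with $M \in \mathbb{R}^{d \times b}$ aggregates to an $O(Nbd)$ cost, accounting for the first term in both bounds. For the determinant phase, a generic $b \times b$ determinant costs $\Theta(b^3)$ per segment; with Cholesky decomposition, the factorization $L_{\tau_j} = L_C L_C^T$ combined with the product-of-diagonals identity $\det(L_{\tau_j}) = \prod_i \ell_{ii}^2$ from Equation~(\ref{eq:lii}) reduces the effective per-segment cost as asserted in the discussion following that equation. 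Summing over the segments then yields the $O(Nb^3)$ and $O(Nb^2)$ terms in the two bounds.

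For the sampling stage, I would argue that the $O(N \log m)$ term arises from maintaining a size-$m$ min-heap while scanning the $N$ diversity scores, and the $O(m)$ term from the rejection-sampling loop: because the normalization constant is chosen as $M = \max(Q)$, each acceptance probability $\alpha_j = Q_j/M$ lies in $(0,1]$, and the expected number of trials per acceptance is $O(1)$ provided the diversity distribution does not collapse, giving a total expected cost of $O(m)$.

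The main obstacle will be the rejection-sampling accounting. A naive worst-case bound would be vacuous, since as any $Q_j$ shrinks toward zero the expected waiting time for acceptance can grow without bound; the key step will be to tie $M$ to the empirical maximum of the current batch of diversity scores so that the ratio $Q_j/M$ is bounded away from zero on average over the shortlisted pool, making the expected number of trials per acceptance a constant. A secondary subtlety is accounting cleanly for how the number of segments scales with $N$ and $b$ before combining per-segment determinant cost with the segment count, so that the final expression matches the statement. Once these two pieces are in place, the sum of the phase-wise bounds collapses to $O(Nbd + Nb^3 + N \log m + m)$ in the unoptimized case and to $O(Nbd + Nb^2 + N \log m + m)$ after integrating Cholesky decomposition and rejection sampling, matching Theorem~\ref{thm:2}.
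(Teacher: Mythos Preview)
Your phase-by-phase decomposition mirrors the paper's proof: both bound kernel construction at $O(Nbd)$, per-segment determinant computation at $O(b^3)$ versus $O(b^2)$, and top-$m$ selection at $O(N\log m)$, then sum.

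The one substantive divergence is the source of the $O(m)$ term. You attribute it to the rejection-sampling loop and then worry---correctly---that bounding its expected cost requires the acceptance ratios $Q_j/M$ to stay bounded away from zero, which is not guaranteed in general. The paper sidesteps this entirely: its $O(m)$ term is the cost of the \emph{model update} on the $m$ selected trajectories, not of rejection sampling. In fact the paper's proof never bounds the cost of the rejection-sampling loop at all; the only complexity reduction it actually exhibits between the two expressions is the Cholesky-induced drop from $O(b^3)$ to $O(b^2)$ per determinant. So your waiting-time analysis, while a legitimate technical concern, is not needed to reproduce the paper's argument---replace it with a one-line $O(m)$ charge for the gradient step and you are done.

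On your ``secondary subtlety'' about how the segment count combines with the per-segment determinant cost: the paper simply writes $(N/b)\cdot b^3$ and $(N/b)\cdot b^2$ and asserts these equal $O(Nb^3)$ and $O(Nb^2)$ respectively, so it does not resolve that arithmetic any more carefully than you propose to.
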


\begin{proof}
\textbf{(Trajectory Segmentation)}  
Dividing $N$ state transitions into segments of length $b$ requires $O(N)$ time.

\medskip
\noindent
\textbf{(Kernel Matrix Construction)}  
For each of the $\tfrac{N}{b}$ segments, constructing $L_{\tau_j} = M^\top M$ involves $O\bigl(b^2 d\bigr)$ operations. The total time is therefore 
$$
O\Bigl(\tfrac{N}{b} \cdot b^2 d\Bigr) \;=\; O\bigl(N\,b\,d\bigr).
$$

\medskip
\noindent
\textbf{(Determinant Calculation)}  
\begin{itemize}
\item \emph{Without Cholesky.} Computing $\det(L_{\tau_j})$ via standard methods (e.g.\ LU decomposition) takes $O\bigl(b^3\bigr)$ per segment. The total cost is 
$$
O\Bigl(\tfrac{N}{b} \cdot b^3\Bigr) \;=\; O\bigl(N\,b^3\bigr).
$$

\item \emph{With Cholesky.} Performing Cholesky decomposition,
$$
L_{\tau_j} \;=\; L_C\,L_C^\top,
$$
where $L_C$ is a lower triangular matrix. The determinant becomes
$$
\det(L_{\tau_j}) \;=\; \prod_{i=1}^b l_{ii}^2,
$$
reducing the computation to $O\bigl(b^2\bigr)$ per segment, giving
$$
O\Bigl(\tfrac{N}{b} \cdot b^2\Bigr) \;=\; O\bigl(N\,b^2\bigr).
$$
\end{itemize}

\medskip
\noindent
\textbf{(Sampling and Sorting)}  
Selecting the top $m$ trajectories (out of $N$) based on diversity scores can be done in $O(N \log m)$ time.

\medskip
\noindent
\textbf{(Model Update)}  
Updating the model with $m$ selected trajectories takes $O(m)$ time.

\medskip
\noindent
\textbf{Conclusion:} 
Summing all components yields 
$$
O\bigl(N\,b\,d + N\,b^3 + N\,\log m + m\bigr)
\quad \text{vs.} \quad
O\bigl(N\,b\,d + N\,b^2 + N\,\log m + m\bigr).
$$
Hence, employing Cholesky decomposition reduces the determinant computation from $O\bigl(N\,b^3\bigr)$ to $O\bigl(N\,b^2\bigr)$.

\medskip
\noindent
\textbf{Cholesky Decomposition Formula.} 
For any window length $b$, let $M = [\hat{s}_1^{ac}, \hat{s}_2^{ac}, \ldots, \hat{s}_b^{ac}]$. The kernel matrix 
$$
L_{\tau_j} = M^\top M \quad \text{has entries} \quad L_{ij} = \hat{s}_i^{ac}\cdot\hat{s}_j^{ac}.
$$
Cholesky yields
$$
L_{\tau_j} = L_C\,L_C^\top,
\quad
L_C = \begin{bmatrix}
l_{11} & 0 & \cdots & 0 \\
l_{21} & l_{22} & \cdots & 0 \\
\vdots & \vdots & \ddots & \vdots \\
l_{b1} & l_{b2} & \cdots & l_{bb}
\end{bmatrix},
$$
where
$$
l_{ii} \;=\; \sqrt{L_{ii} - \sum_{k=1}^{i-1} l_{ik}^2},
\quad
l_{ij} \;=\; \frac{L_{ij} - \sum_{k=1}^{j-1} l_{ik}\,l_{jk}}{\,l_{jj}\,}\quad (\text{for } j<i).
$$
Then
$$
\det(L_{\tau_j}) \;=\; \prod_{i=1}^b l_{ii}^2.
$$
Cholesky decomposition simplifies the determinant computation, improving efficiency and numerical stability.
\end{proof}

\section{Additional Experiments}
\label{sec:additional_exp}

In this section, we supplement the main text with additional experimental results related to the EDER algorithm, aiming to provide a more comprehensive demonstration of its performance across different environments and parameter settings. These experiments include: parameter analysis in the Mujoco robotic arm environment, extensive testing in a broader range of Atari games, sensitivity analysis of key parameters, and additional experiments in the Habitat simulation environment.

\subsection{Complete Results in Habitat Environment}

This section provides the complete experimental results of EDER in all tested environments to further validate its scalability and effectiveness in high-dimensional visual navigation tasks. Through a series of experiments conducted on the AI Habitat platform, we evaluated EDER's performance in realistic 3D environments, where the tasks involve high-dimensional observation spaces, making efficient exploration particularly challenging. The experiments were conducted in three different environments from the Habitat-Matterport 3D Research Dataset (HM3D), which simulate complex real-world indoor spaces. The environments selected include:

\begin{itemize}
    \item \textbf{Residential Environment}: Typical household spaces such as living rooms and bedrooms, used to assess EDER's navigation capabilities in common residential scenarios.
    \item \textbf{Office Environment}: Including workspaces, meeting rooms, and corridors, testing EDER's adaptability in office spaces.
    \item \textbf{Commercial Environment}: Simulating shops or shopping centers with open areas and various visual elements, further evaluating EDER's performance in diverse environments.
\end{itemize}

In each environment, we compared EDER with multiple baseline methods, particularly DDPG and its variants. Fig.~\ref{fig:m} shows the comparison of success rates across all environments, indicating that EDER consistently outperformed baseline methods, achieving higher success rates and demonstrating robustness in diverse scenarios. These results further confirm EDER's scalability and practicality in high-dimensional visual tasks, especially in complex and realistic 3D environments where it effectively navigates and explores, significantly enhancing the overall performance of reinforcement learning algorithms.

\begin{figure}[H]
\centering
\subfloat[DDPG]{\includegraphics[width=0.33\textwidth]{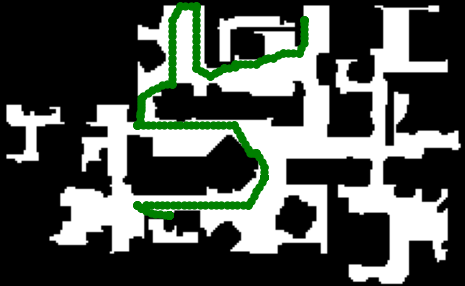}\label{fig:ddpg11}}
\hfill
\subfloat[DDPG+HER]{\includegraphics[width=0.33\textwidth]{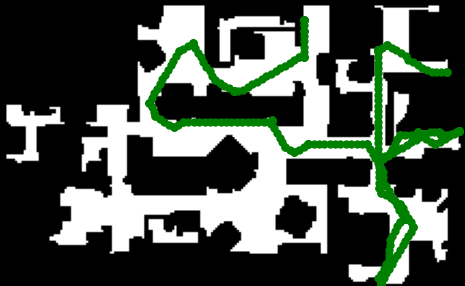}\label{fig:her11}}
\hfill
\subfloat[DDPG+TER]{\includegraphics[width=0.33\textwidth]{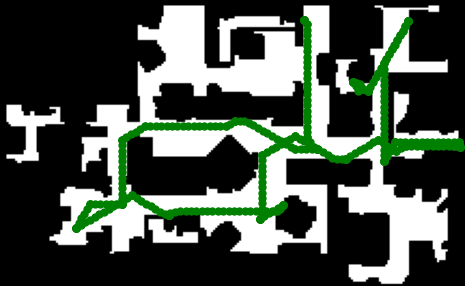}\label{fig:ter11}}
\hfill
\\
\subfloat[DDPG+LoBER]{\includegraphics[width=0.33\textwidth]{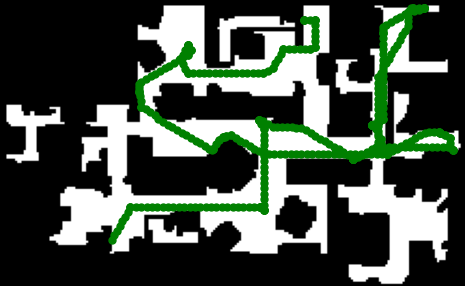}\label{fig:lober11}}
\hfill
\subfloat[DDPG+Relo]{\includegraphics[width=0.33\textwidth]{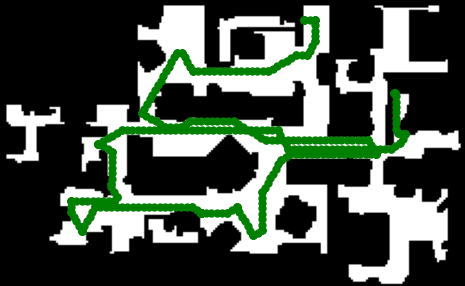}\label{fig:relo11}}
\hfill
\subfloat[DDPG+EDER]{\includegraphics[width=0.33\textwidth]{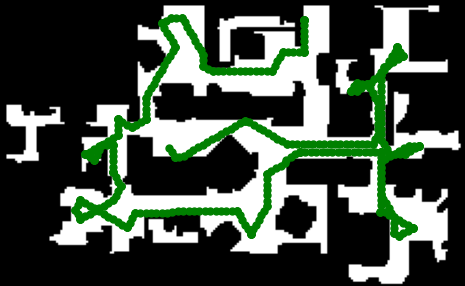}\label{fig:eder11}}
\caption{Trajectories of policies trained with different exploration algorithms on the Habitat environment}
\label{fig:aaaa}
\end{figure}

\subsection{Complete Results in Atari Games}

This section presents the complete experimental results of EDER across all tested Atari games. These experiments aim to evaluate EDER's performance in handling sparse rewards and complex exploration tasks. The experiments were conducted in the OpenAI Gym environment, focusing on simulated Atari game tasks, where each game is governed by specific rules, and the agent interacts with the environment through a limited set of discrete actions. Observations typically include pixel data and game scores, with the task goal usually being to optimize the game score or complete specific challenges. The reward system in most Atari games is designed based on game scores, providing feedback for the algorithm to optimize behavior. All agents were trained under the same configuration, using a minibatch size of 64 and evaluating the model every 5,000 training steps before reaching a maximum of 1 million training steps. The experiments were conducted using PyTorch-based DQN and its variants, including Random, standard DQN, DQN+PER, DQN+TER, DQN+LaBER, DQN+Relo, and DQN+EDER method, as well as EDER without rejection sampling (w/o R.S.) and without Cholesky decomposition (w/o C.D.). In the Atari game experiments, DQN was chosen due to its suitability for handling discrete action spaces. EDER outperformed traditional DQN and its variants in most Atari environments, particularly in complex exploration tasks like Montezuma's Revenge, where EDER effectively discovered sparse rewards and significantly improved scores. Overall, EDER demonstrated superior performance in these games, showcasing stronger learning capabilities and adaptability in environments with sparse rewards and complex structures.

\subsection{Complete Results in Mujoco Experiments}

In this series of experiments, we introduced the self-designed EDER+DDPG method and rigorously compared it with the traditional DDPG algorithm and its variants, including DDPG, DDPG+PER, DDPG+TER, DDPG+LaBER, DDPG+Relo, and DDPG+EDER method, as well as EDER without rejection sampling (w/o R.S.) and without Cholesky decomposition (w/o C.D.). The experiments were conducted in the Mujoco simulation environment, specifically focusing on the challenging Fetch robotic arm and Shadow Dexterous Hand tasks. Each method was evaluated using five different random seeds to ensure the robustness and reliability of the results. In the Fetch robotic arm tasks, which test both control precision and exploration efficiency, EDER+DDPG consistently outperformed other methods. The results showed higher success rates across all Fetch tasks. The EDER+DDPG method also demonstrated faster learning, attributed to its enhanced exploration capabilities, allowing the agent to discover rewarding states more efficiently in high-dimensional spaces. The use of Determinantal Point Processes (DPPs) in EDER improved exploration efficiency by promoting diversity in experience replay, leading to better generalization and policy learning.

In the Shadow Dexterous Hand tasks, which involve a robotic hand with 24 degrees of freedom and require complex manipulations, EDER+DDPG again outperformed the baseline methods. The method achieved significantly higher success rates in tasks like HandBlockRotate and HandPenRotate, which require intricate manipulation. The superior exploration strategy of EDER+DDPG, enabled by DPP-based sampling, allowed it to effectively navigate the large and complex state-action space, even in these high-dimensional tasks. Despite the complexity, EDER+DDPG maintained competitive training times, demonstrating that the additional computational overhead is justified by the substantial improvements in learning outcomes and task performance. Overall, the results from both FetchEnv and HandEnv validate the effectiveness of EDER+DDPG, highlighting its strong capabilities and broad applicability in complex continuous control tasks.

\subsection{Ablation Experiments}
\label{sec:proofs_ablation}
In this section, we conduct a series of ablation experiments to deeply explore the specific contributions of each key component of the EDER method to model performance. These experiments not only help us understand the role of each component in the overall method but also provide a basis for further algorithm optimization.

\stitle{Impact of Sampling Number $ m $}

In the EDER method, the sampling number $ m $ determines the number of samples extracted in each minibatch, which has a crucial impact on the algorithm's stability and convergence speed. To analyze the direct effect of this parameter on learning outcomes, we adjusted the value of $ m $ in a series of predetermined experiments and observed significant differences in model performance under different settings. In the experiments, the sampling number $ m $ was set to 50, 100, 200, 300 and 500 respectively, and EDER's performance was tested across multiple environments. The results show that larger $ m $ values generally improve learning stability and convergence speed, but also increase computational burden. Particularly in environments with sparse rewards, increasing the sampling number helps the model find effective strategies faster. Figure A.4.1 shows the success rate curves of EDER in various tasks with different $ m $ values, indicating that the model's performance is optimal at $ m = 200 $.

\begin{figure}[H]
\centering
\subfloat[FetchPickAndPlace]{\includegraphics[width=0.24\textwidth]{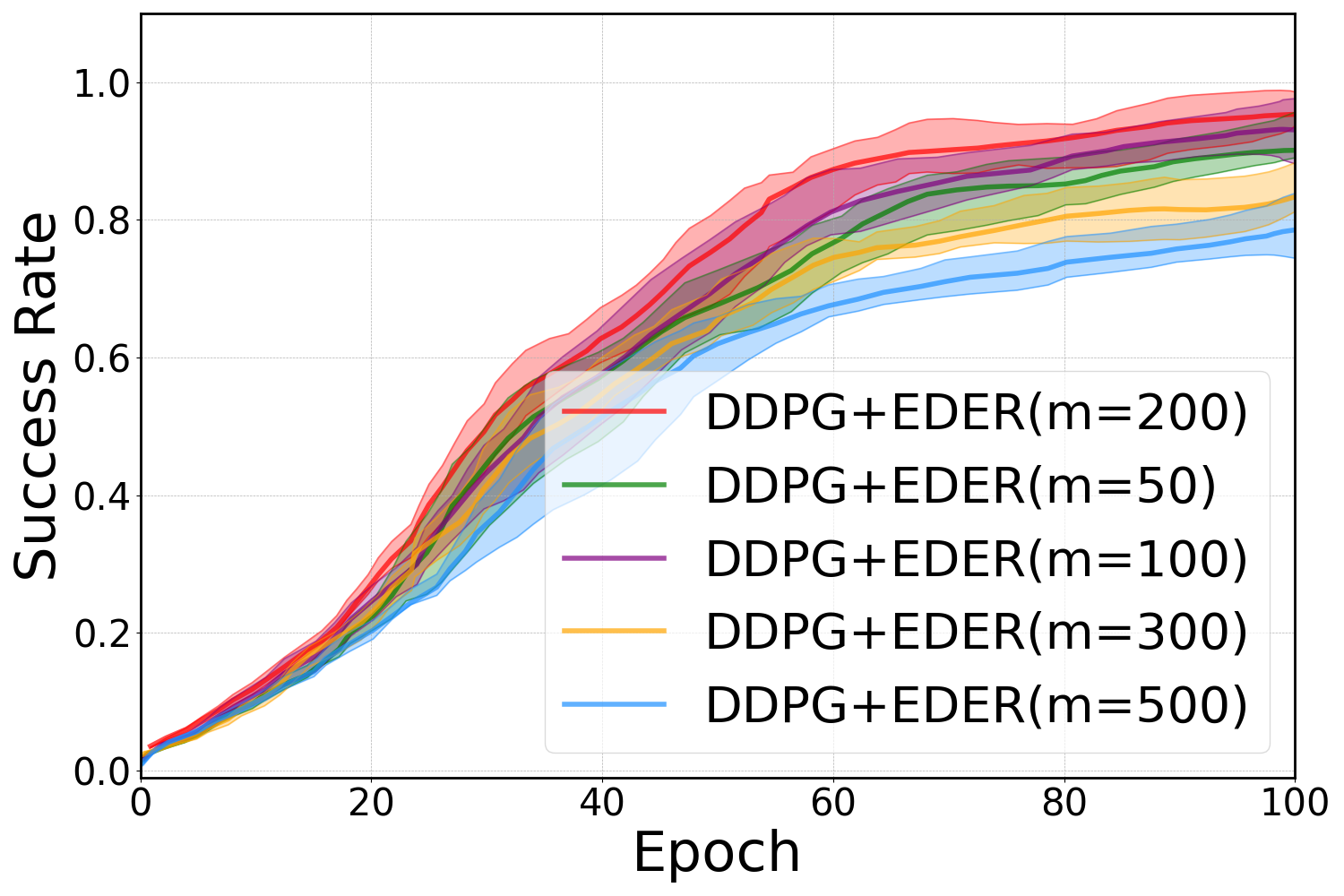}\label{fig:fetch_pick_place1}}
\hfill
\subfloat[FetchPush]{\includegraphics[width=0.24\textwidth]{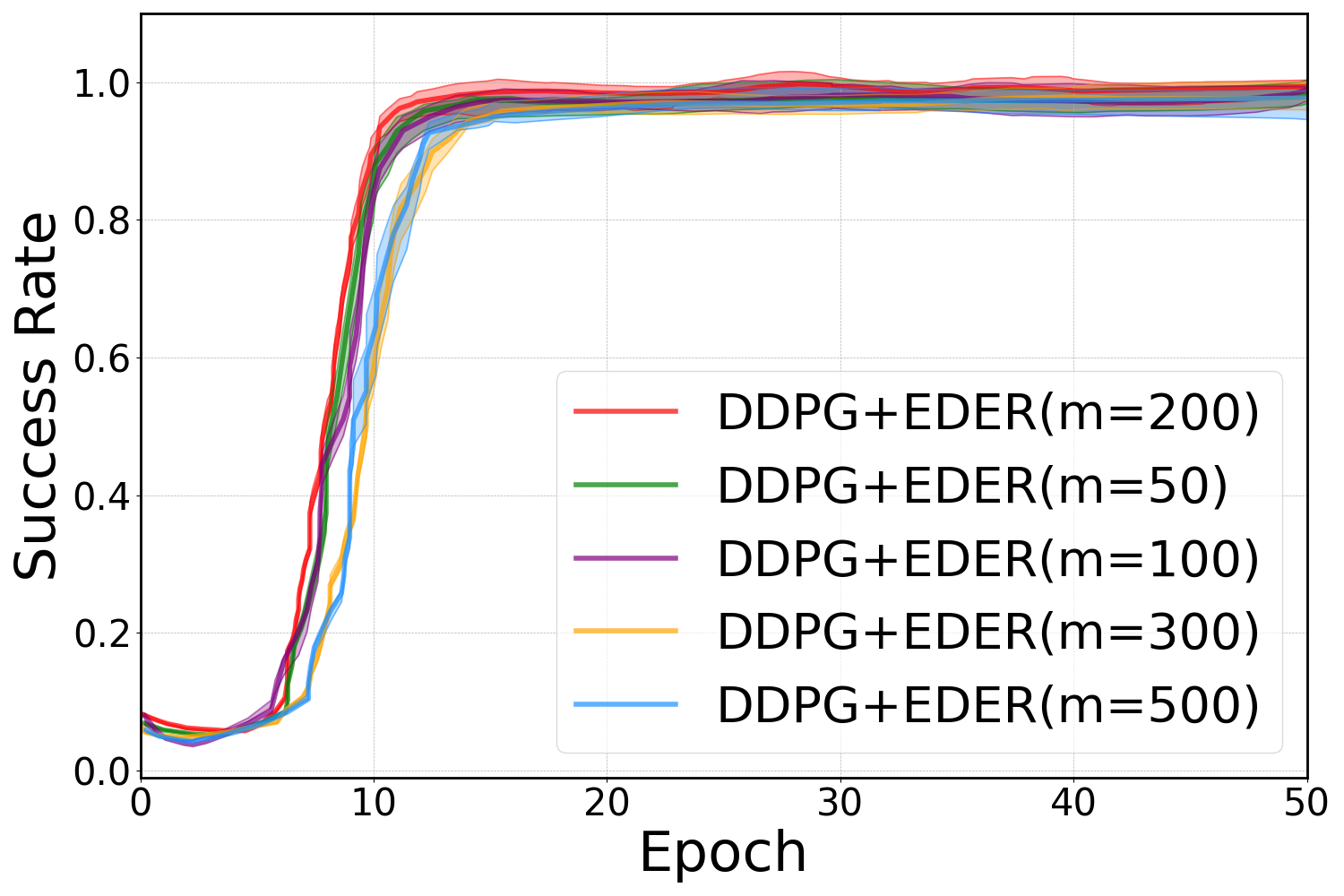}\label{fig:fetch_push1}}
\hfill
\subfloat[HandBlockRotate]{\includegraphics[width=0.24\textwidth]{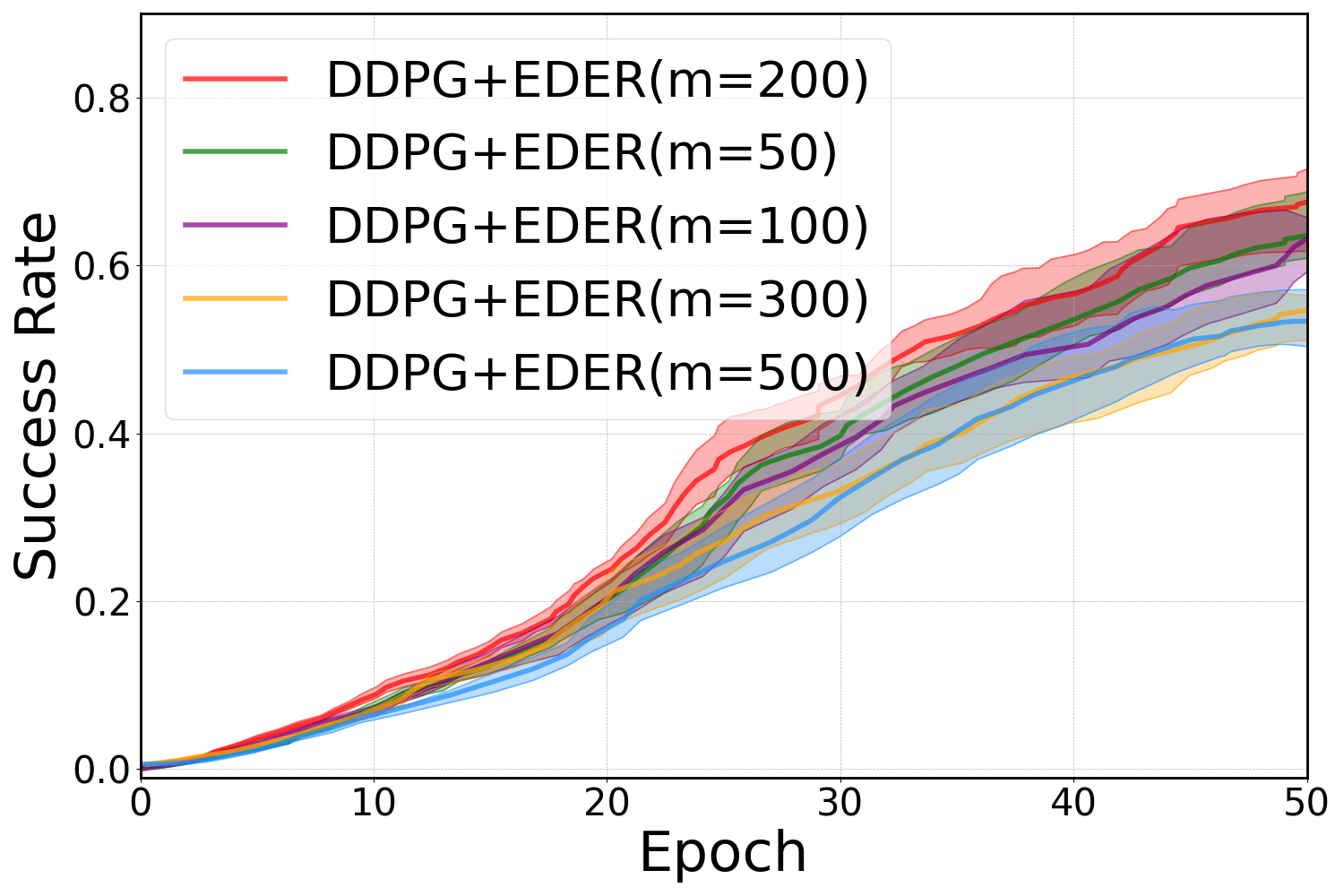}}\label{fig:hand_block_rotate1}
\hfill
\subfloat[HandPenRotate]{\includegraphics[width=0.24\textwidth]{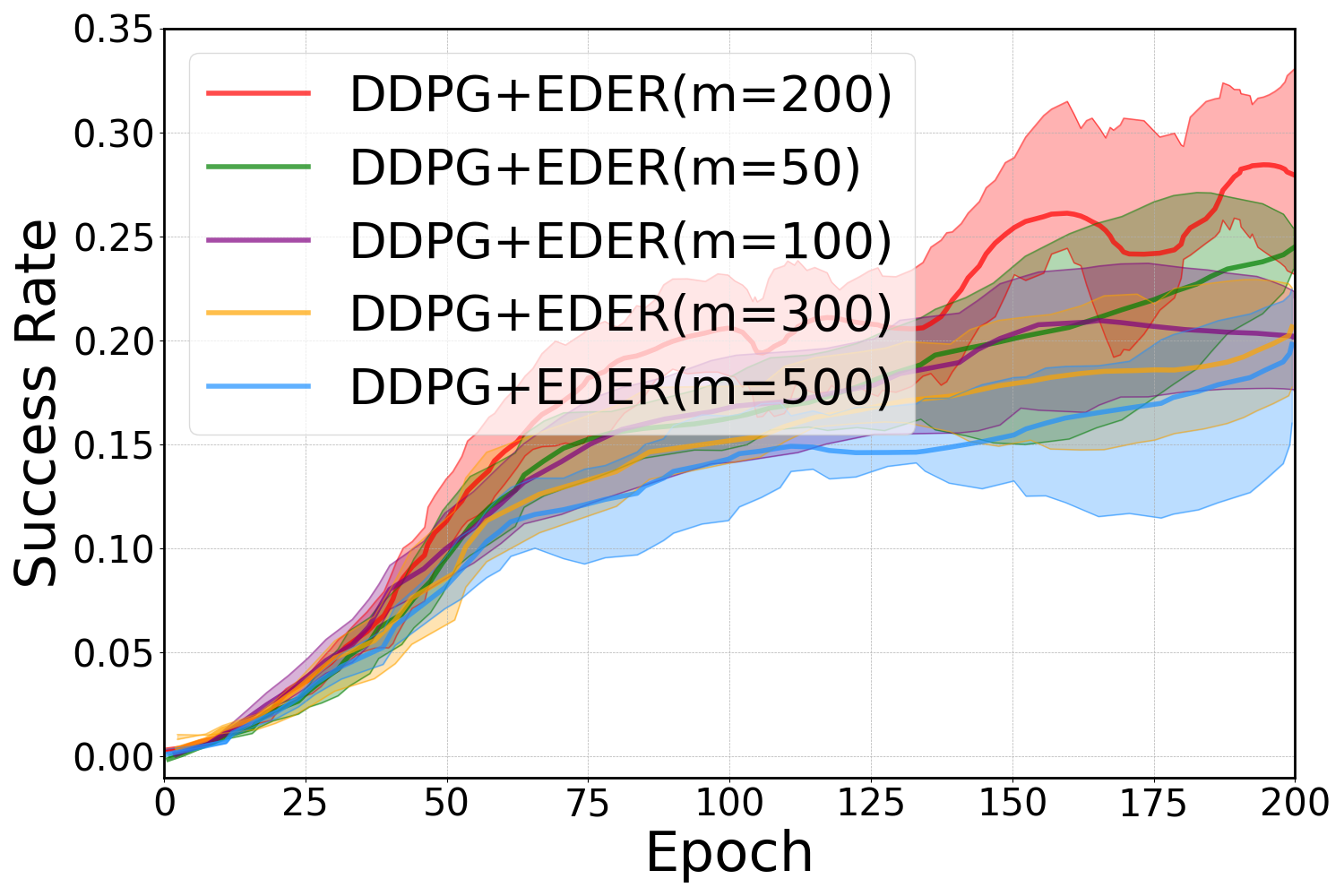}}\label{fig:hand_pen_rotate1}

\caption{Success Rate Comparison between EDER and Other Baselines in MuJoCo for number M}
\label{fig:m}
\end{figure}

\stitle{Impact of Trajectory Length $ b $}

Trajectory length $ b $ is a key parameter in reinforcement learning tasks as it determines how much information the agent can gather from the environment. To investigate the impact of this parameter on learning performance, we designed a series of experiments by varying the trajectory length $ b $ and observing its effect on the model's performance.In the experiments, the trajectory length $ b $ was set to 2, 5, 10, and 20, and EDER's performance was evaluated in different test environments. Longer trajectories typically provide more comprehensive information about the environment's dynamics, aiding strategy learning, but also increase computational complexity. The results show that a moderate trajectory length (e.g., $ b = 10 $) performs best in most cases, achieving a good balance between learning efficiency and computational burden. Figure ~\ref{fig:b} presents the learning curves of EDER with different trajectory lengths, further validating this conclusion.

\begin{figure}[H]
\centering
\subfloat[FetchPickAndPlace]{\includegraphics[width=0.24\textwidth]{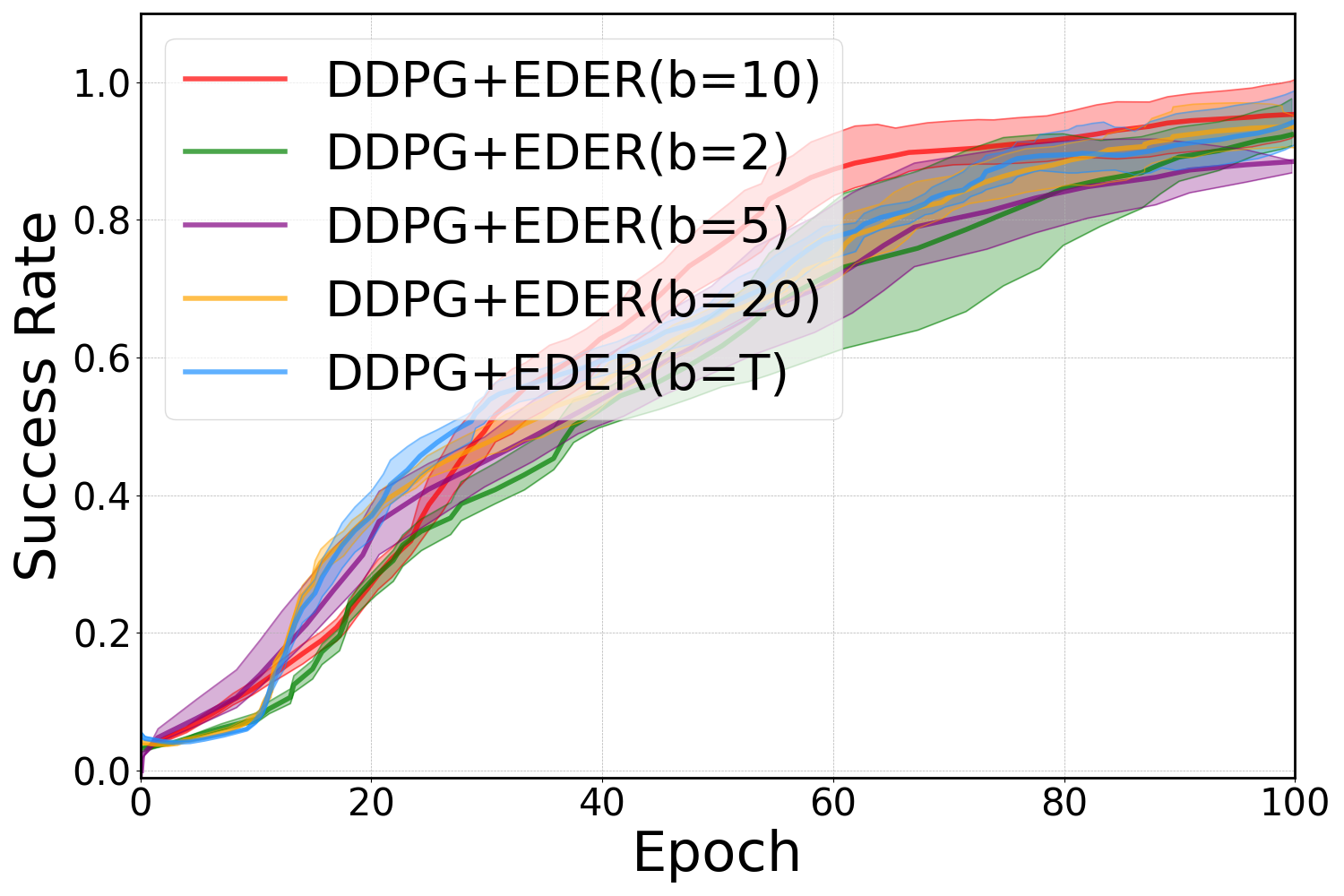}\label{fig:fetch_pick_place2}}
\hfill
\subfloat[FetchPush]{\includegraphics[width=0.24\textwidth]{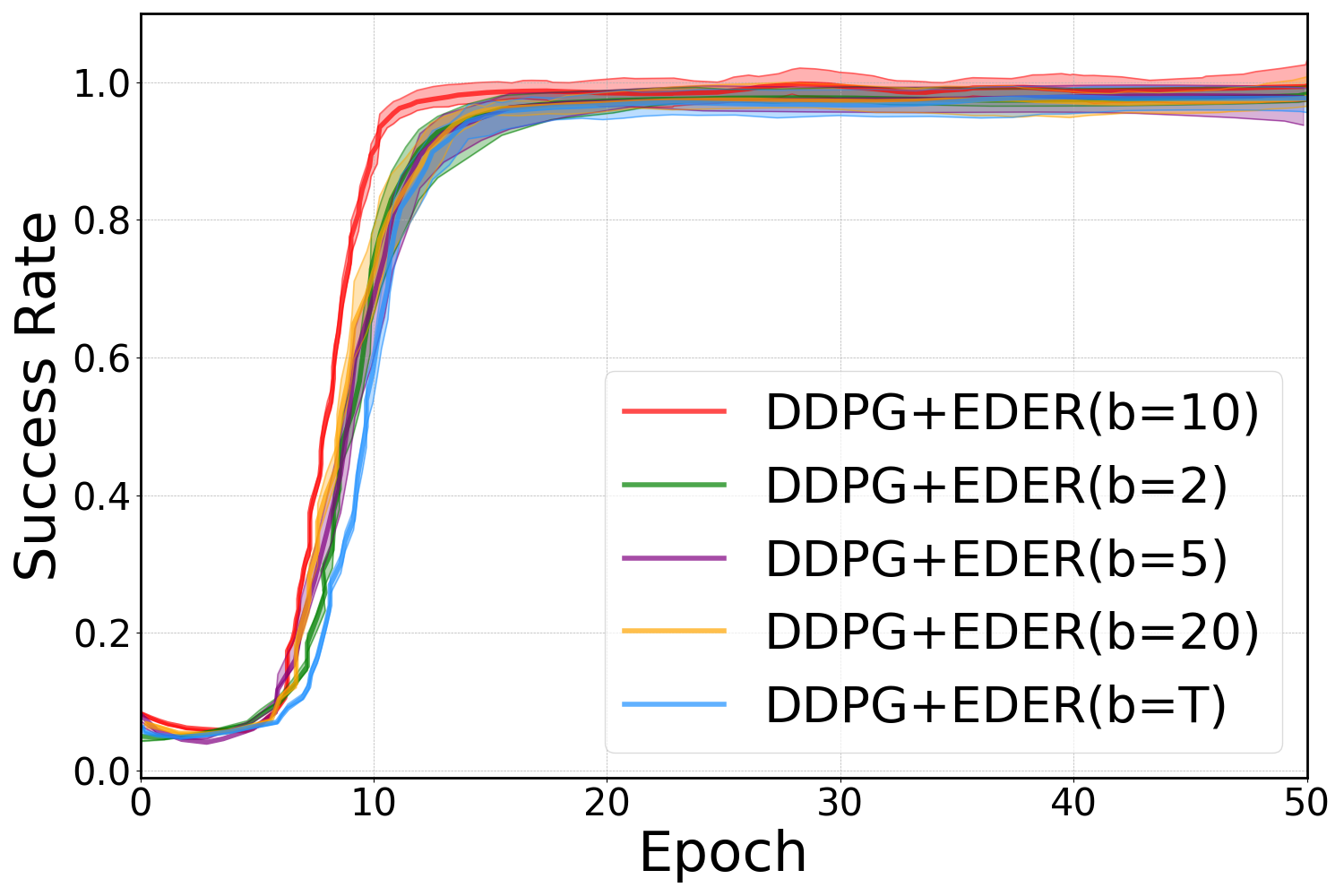}\label{fig:fetch_push2}}
\hfill
\subfloat[HandBlockRotate]{\includegraphics[width=0.24\textwidth]{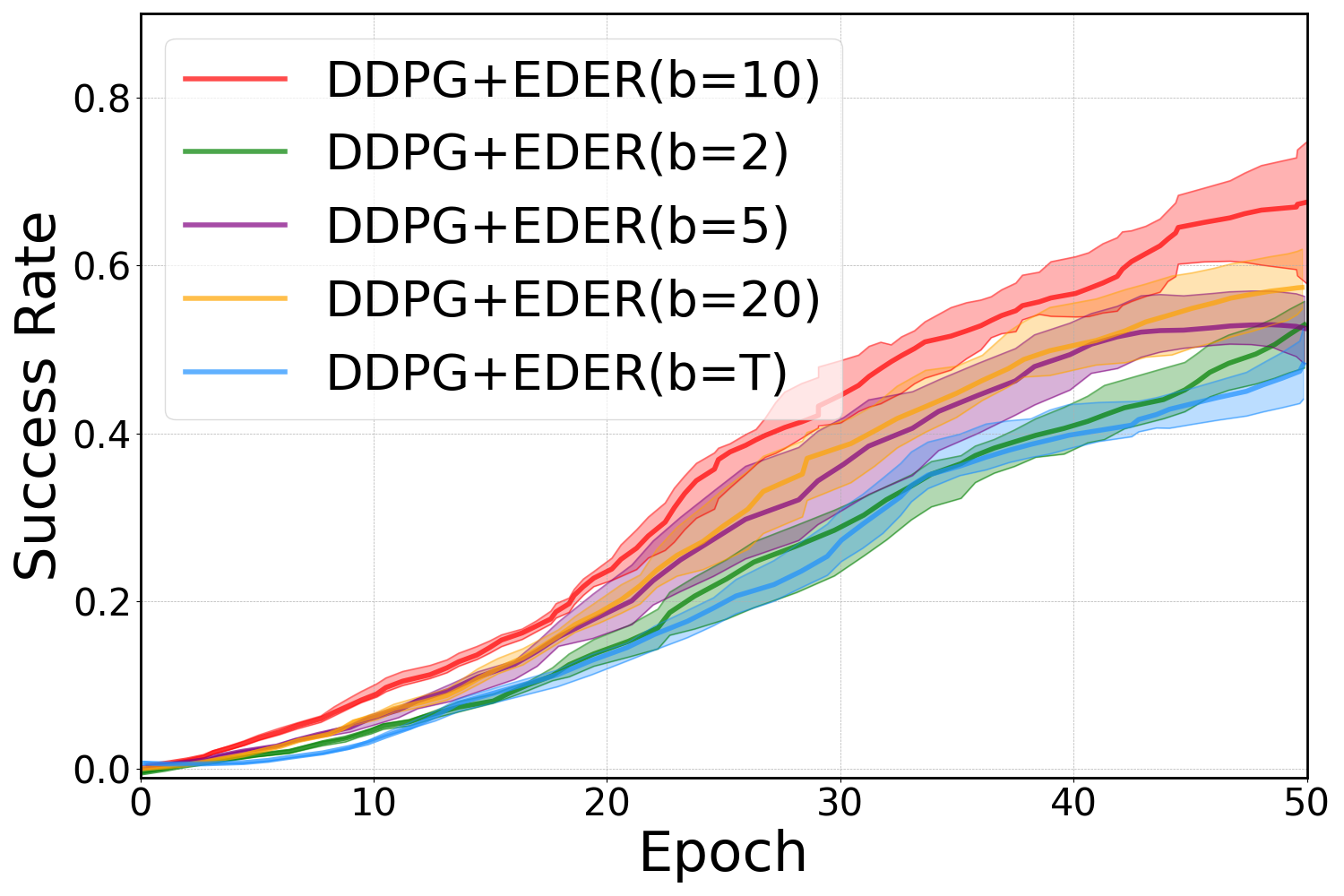}}\label{fig:hand_block_rotate2}
\hfill
\subfloat[HandPenRotate]{\includegraphics[width=0.24\textwidth]{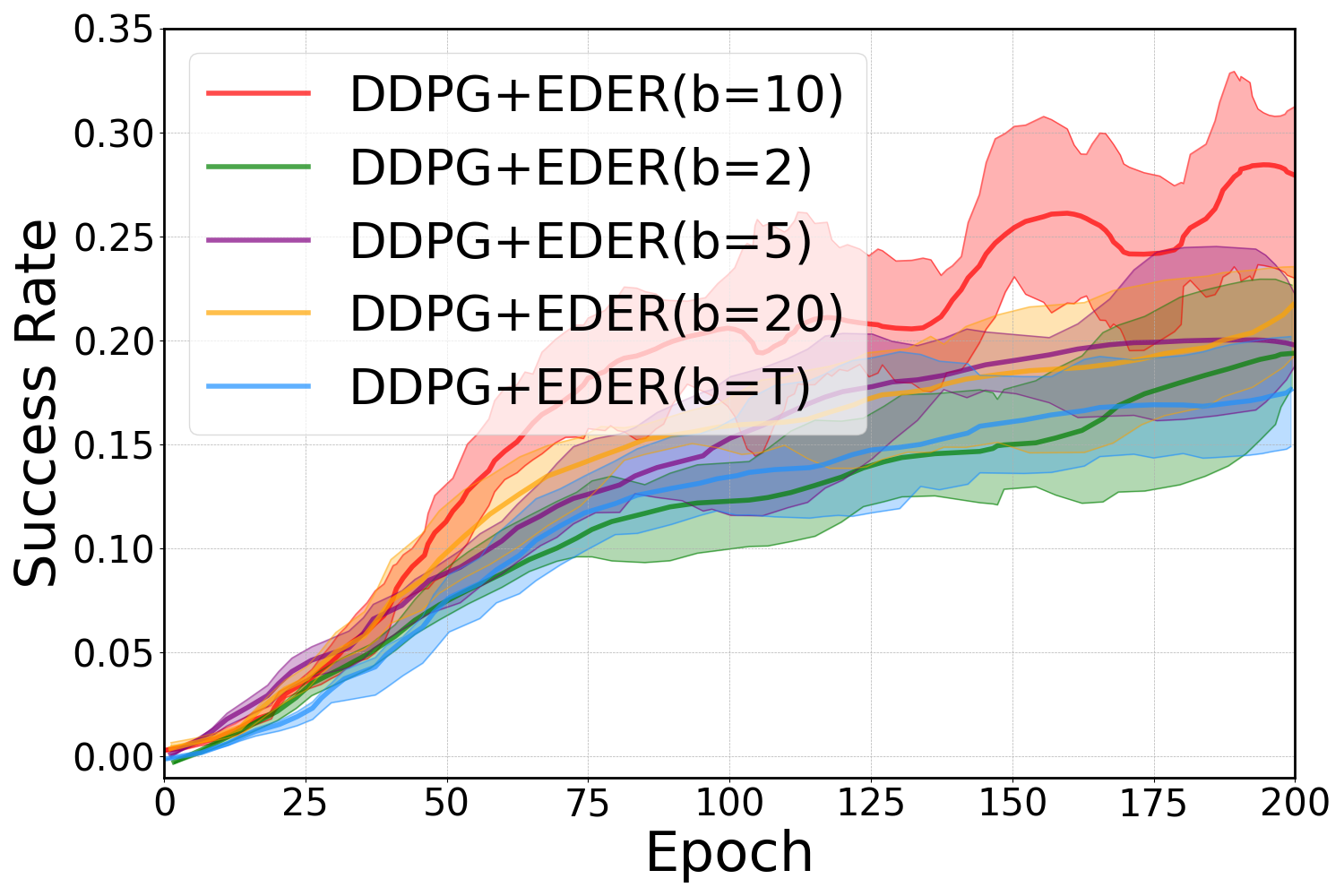}}\label{fig:hand_pen_rotate2}
\caption{Success Rate Comparison between EDER and Other Baselines in MuJoCo for length b}
\label{fig:b}
\end{figure}

\stitle{Impact of Rejection Sampling}

Rejection sampling is an important technique used in the EDER method to optimize the sampling process by prioritizing trajectory segments with higher diversity. To evaluate the impact of rejection sampling on algorithm performance, we conducted comparison experiments with and without rejection sampling.In the experiments, we kept other parameters constant and only varied the presence of rejection sampling. The results show that using rejection sampling significantly improved EDER's learning efficiency in complex environments, especially in tasks requiring deep exploration. Rejection sampling effectively selects representative samples, accelerating the strategy learning process. Table presents the learning curves in both scenarios, demonstrating the critical role of rejection sampling in enhancing EDER's performance.

\stitle{Impact of Cholesky Decomposition}

Cholesky decomposition is an optimization technique used to improve computational efficiency, particularly when handling high-dimensional state spaces. To assess the impact of Cholesky decomposition on algorithm performance and efficiency, we conducted additional ablation experiments, comparing the algorithm's runtime and learning outcomes with and without using Cholesky decomposition.In the experiments, we tested the cases with path length $ b = 10 $ and $ b = T $ (the entire trajectory length), with and without Cholesky decomposition. The results show that using Cholesky decomposition significantly improves computational efficiency, especially when dealing with larger trajectory scales. Cholesky decomposition reduces computational overhead and improves numerical stability. Table presents the training time and convergence curves in different scenarios, further confirming the importance of Cholesky decomposition in complex environments.Through the above ablation experiments, we thoroughly explored the impact of key components in the EDER method, including the sampling number $ m $, trajectory length $ b $, the presence of rejection sampling, and the role of Cholesky decomposition. These experiments not only help us understand the specific contributions of each parameter and technique to the algorithm's performance but also provide clear guidance on how to adjust these parameters to optimize learning outcomes in practical applications. This research is crucial for further enhancing EDER's performance in reinforcement learning tasks.

\begin{table}[H]
\centering
\setlength{\tabcolsep}{2pt}
\small
\begin{tabular}{lclc}
\toprule
\textbf{Configuration($ m = 200 $)} & \textbf{Time (hh:mm:ss)} & \textbf{Configuration($ b = 10 $)} & \textbf{Time (hh:mm:ss)} \\
\midrule

DDPG+EDER ($ b = 10 $) & 02:04:20  & DDPG+EDER ($ m = 300 $) & 02:41:04  \\
DDPG+EDER ($ b = 10 $, w/o R.S.) & 02:53:10 & DDPG+EDER ($ m = 300 $, w/o R.S.) & 03:28:32 \\
DDPG+EDER ($ b = 10 $, w/o C.D.) & 02:09:13 & DDPG+EDER ($ m = 300 $, w/o C.D.) & 03:07:39 \\
DDPG+EDER ($ b = 10 $, w/o R.S., w/o C.D.) & 03:25:21 & DDPG+EDER ($ m = 300 $, w/o R.S., w/o C.D.) & 03:51:06 \\
DDPG+EDER ($ b = T $) & 02:36:00 & DDPG+EDER ($ m = 500 $) & 03:45:53 \\
DDPG+EDER ($ b = T $, w/o R.S.) &03:02:41 & DDPG+EDER ($ m = 500 $, w/o R.S.) & 04:26:14 \\
DDPG+EDER ($ b = T $, w/o C.D.) & 02:51:20 & DDPG+EDER ($ m = 500 $, w/o C.D.) & 04:01:49 \\
DDPG+EDER ($ b = T $, w/o R.S., w/o C.D.) & 03:55:55 & DDPG+EDER ($ m = 500 $, w/o R.S., w/o C.D.) &  04:58:02\\
\bottomrule
\end{tabular}
\vspace{-5pt} 
\caption{Training times for EDER with and without Rejection Sampling and Cholesky Decomposition on Push task. The results highlight the impact of these optimizations on training efficiency.}
\vspace{-5pt} 
\label{tab:ablation_training_time}
\end{table}

\section{Experimental Implementation Details}
\label{sec:ex_imple_detail}

\subsection{Environment Setup}

In this section, we provide a detailed description of the experimental setups in the Atari, Mujoco, and Habitat environments, including task descriptions, state and action spaces, and the specific objectives of each environment.

\stitle{Habitat Environment} The Habitat platform provides a highly realistic 3D indoor simulation environment, ideal for testing the performance of algorithms in visual navigation tasks. Habitat requires agents to navigate complex 3D indoor scenes and make decisions based on multi-sensor information to complete set task objectives.

\begin{figure}[H]
\centering
\includegraphics[width=0.9\textwidth]
{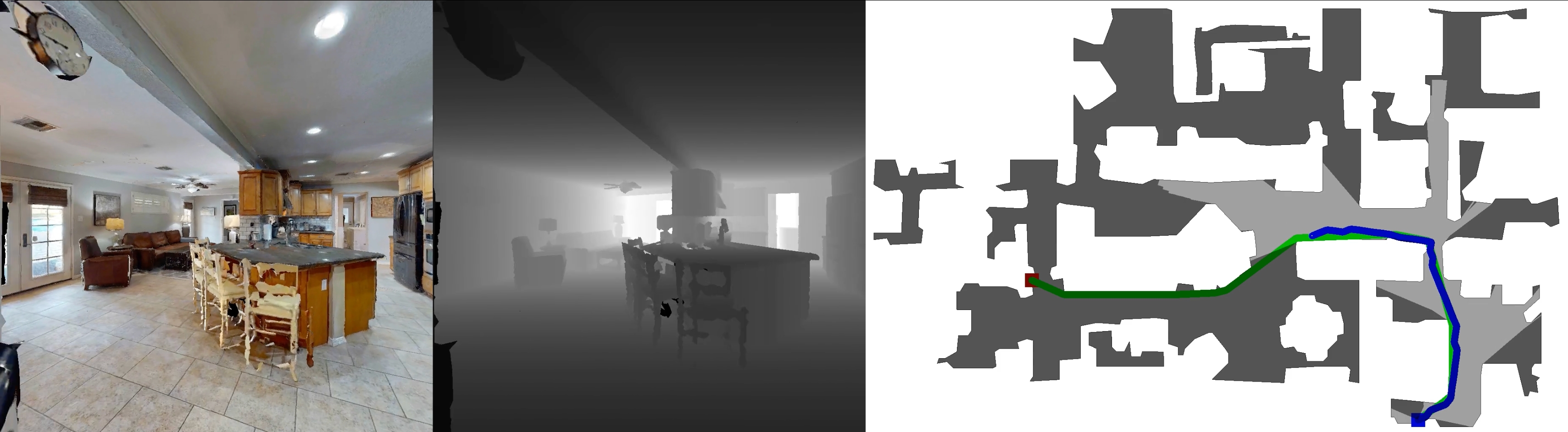}
\caption{Habitat environment for RL with sparse rewards}
\label{fig:ha}
\end{figure}

\textbf{Environment Description}

The Habitat platform is based on the HM3D (Habitat-Matterport 3D) dataset, providing high-quality indoor scene renderings that realistically recreate various residential, office, and commercial spaces. This dataset captures 3D indoor scenes from real buildings, covering a variety of different indoor layouts and structures, ensuring diversity and complexity in experimental environments. Agents in these scenes are equipped with multiple sensors, including RGB cameras, depth cameras, and semantic segmentation sensors, providing comprehensive environmental perception information to assist agents in precise navigation in three-dimensional spaces.

\textbf{Task Objectives}

We selected three typical scenarios to test EDER's performance in visual navigation tasks:

\begin{itemize}
    \item \textbf{Residential Environment}: Typical household spaces such as living rooms and bedrooms. The task requires the agent to navigate complex home environments and find target locations. This task tests the agent's navigation and decision-making abilities when faced with various obstacles and diverse layouts in a home setting.
    \item \textbf{Office Environment}: Including workspaces, meeting rooms, and corridors within office buildings. The task focuses on testing the agent's navigation ability in complex architectural structures, particularly its performance in handling long-path planning and multiple obstacles.
    \item \textbf{Commercial Environment}: Simulating scenes in shops or shopping centers, these scenarios contain open spaces and various visual elements. The task requires the agent to accurately locate targets in this dynamic and diverse environment, evaluating its adaptability and response capabilities under complex visual inputs.
\end{itemize}

\stitle{Atari Environment} The Atari game environment is a commonly used test platform for discrete action space in reinforcement learning, offering numerous challenging game scenarios for evaluating exploration strategies and complex decision-making processes. In our experiments, we selected various types of games from the Atari 57 suite to comprehensively assess the performance of the EDER method across different game categories. In the Atari environment, the agent's state is composed of pixel data from the game screen, with each frame represented as an RGB image. The action space is discrete, with each game having a set of predefined actions such as moving, jumping, or attacking. The reward signal is typically directly related to the game score, with the agent learning the optimal strategy by maximizing the score.

\begin{figure}[ht]
\centering
\includegraphics[width=0.9\textwidth]
{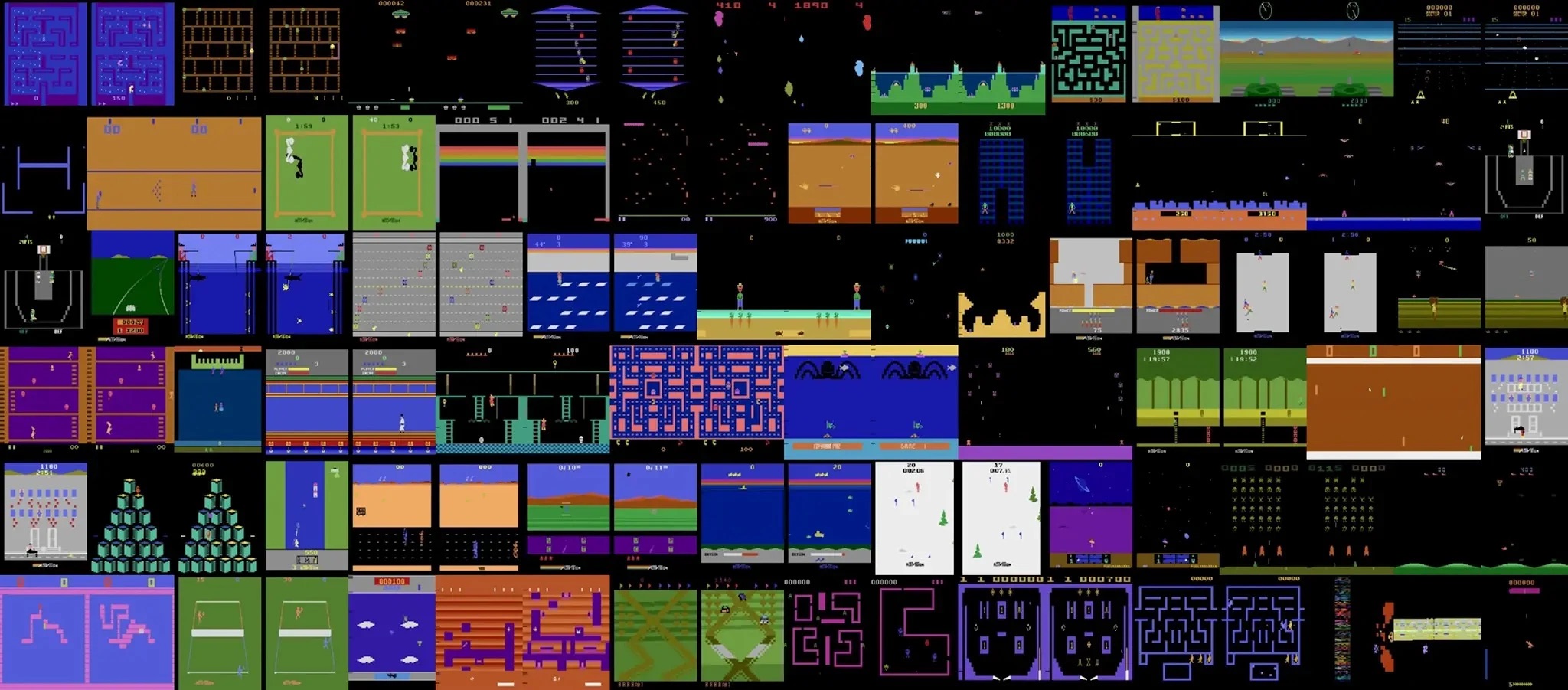}
\caption{the atari environment for RL with sparse rewards}
\label{fig:at}
\end{figure}

\textbf{Task Objectives} The experiments cover the following typical game categories to test EDER's performance under different challenges:\textbf{Exploration Games}: These games generally have sparse reward structures, requiring deep exploration by the agent to achieve high scores, making them suitable for evaluating EDER's exploration capabilities.\textbf{Action Games}: These games emphasize quick reactions and precise control, testing the agent's ability to execute strategies in highly dynamic environments.\textbf{Strategy Games}: These games require the agent to formulate and execute complex strategies, making them ideal for evaluating EDER's long-term planning and decision-making abilities. In each category, we compared the performance of EDER with other baseline methods, including average score, success rate, and convergence speed. Through these tests, we comprehensively evaluated EDER's adaptability and performance across different types of tasks, supporting its effectiveness in a wide range of reinforcement learning applications.

\stitle{Mujoco Environment}The Mujoco environment is a widely-used physics engine for robot control and continuous action space tasks. It simulates realistic physical interactions, making it ideal for testing strategies in complex manipulation tasks and fine motor control. In our experiments, we focused on two classic task environments: FetchEnv and HandEnv, representing different types of robotic manipulation tasks, and conducted multiple experiments to evaluate the performance of the EDER method in these environments.The picture can be seen in Figuare ~\ref{fig:mu}

\begin{figure}[H]
\centering
\includegraphics[width=\textwidth]
{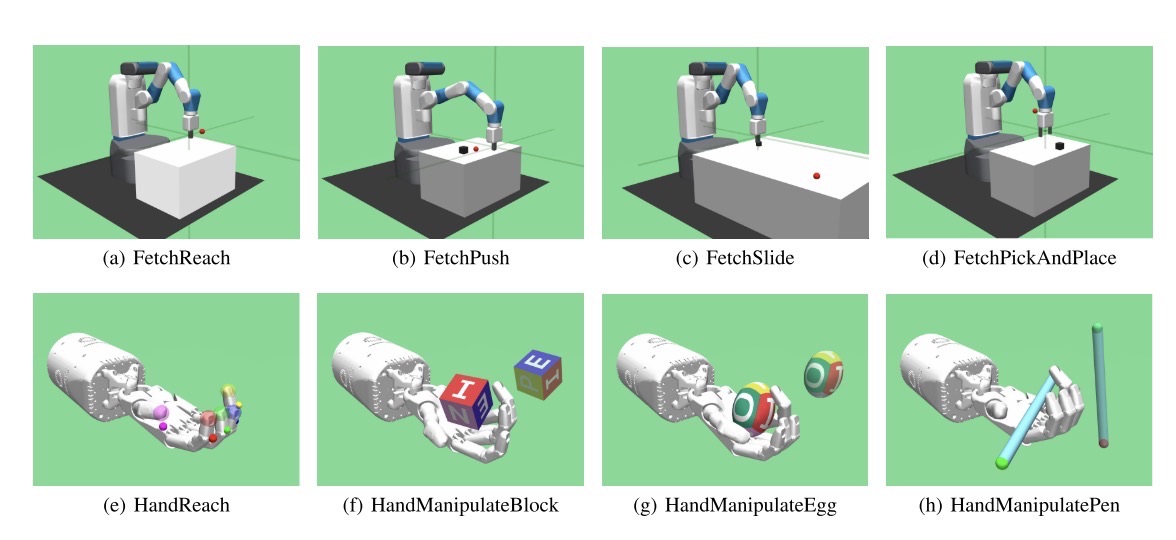}
\caption{the Open AI Robotics environment for RL with sparse rewards}
\label{fig:mu}
\end{figure}

\textbf{FetchEnv} is based on the Fetch robotic arm simulation, featuring a robotic arm with seven degrees of freedom equipped with a two-finger parallel gripper for grasping and manipulating objects. The task of the robotic arm is to move, slide, or place an object within a three-dimensional space. The action space is a four-dimensional vector, with three dimensions controlling the gripper's movement along the x, y, and z axes, and the fourth dimension controlling the gripper's opening and closing. In each step, the agent adjusts the gripper's position and orientation by selecting an action vector. Observations include the following:

\begin{itemize}
    \item Cartesian coordinates of the gripper's position (x, y, z).
    \item Linear velocity of the gripper (vx, vy, vz).
    \item Position information of the object and its relative position to the target.
\end{itemize}

\textbf{Task Objectives} FetchEnv includes multiple tasks, each with specific goals and challenges. The four main tasks tested in our experiments are:

\begin{itemize}
    \item \textbf{FetchPickAndPlace}: The agent learns to pick up objects and place them accurately at the target position.
    \item \textbf{FetchPush}: The agent must push objects along a specified path to the target area.
    \item \textbf{FetchReach}: The task requires the agent to bring the gripper as close as possible to the target object by controlling the gripper's position.
    \item \textbf{FetchSlide}: The agent slides the object to a distant target area, testing precise control and long-term planning abilities.
\end{itemize}

\textbf{HandEnv} is based on the simulation of the Shadow Dexterous Hand, which has 24 degrees of freedom, making it a highly complex robotic manipulation environment. The dexterous hand is equipped with multiple independently controlled joints, allowing for fine-grained grasping and manipulation tasks, making it an ideal platform for testing reinforcement learning algorithms in high-dimensional action spaces. The action space is a 20-dimensional vector corresponding to the positions of 20 independently controlled joints. The remaining coupled joints work in coordination with these controlled joints. In each step, the agent completes complex manipulation tasks by adjusting the postures of these joints. Observations include:

\begin{itemize}
    \item Position and velocity information of each independently controlled joint.
    \item Object position, rotation angle, and deviation from the target posture.
\end{itemize}

\textbf{Task Objectives} The tasks in HandEnv are more complex than in FetchEnv, mainly testing the agent's fine control ability in high-dimensional action spaces. The four main tasks tested in our experiments are:

\begin{itemize}
    \item \textbf{HandBlockRotate}: The agent learns to grasp a block and rotate it to a specified angle in the air.
    \item \textbf{HandEggFull}: The task requires the agent to grasp and rotate an egg to a specific position without damaging the object.
    \item \textbf{HandPenRotate}: The agent manipulates a pen, enabling it to perform complex rotations between the fingers.
    \item \textbf{HandReach}: The task requires the agent to precisely control the fingers to reach the target position accurately.
\end{itemize}

In these environments, we conducted a comprehensive evaluation of EDER, focusing on its navigation ability and strategy learning under high-dimensional visual input and complex environments. Through these experiments, we aim to validate EDER's effectiveness in high-dimensional visual tasks, particularly its potential applications in complex real-world scenarios.

\subsection{Hardware and Software Setup}

Our experiments were conducted on a server equipped with an NVIDIA Tesla V100 GPU and 32GB of memory. The operating system was Ubuntu 18.04, and the key software libraries used included PyTorch 2.4.0, OpenAI Gym 0.17.3, MuJoCo 2.0, and Habitat API 0.2.1. All experimental code was implemented in a Python 3.9 environment.

\subsection{Codebase Used}
\label{sec:codebased_used}
Our method was implemented by building on top of the following codebases:

\begin{itemize}
\item \textbf{Habitat-Lab}: \url{https://github.com/facebookresearch/habitat-lab.git}
\item \textbf{Habitat-Sim}: \url{https://github.com/facebookresearch/habitat-sim.git}
\item \textbf{TER}: \url{https://github.com/Improbable-AI/ter.git}
\item \textbf{LaBER}: \url{https://github.com/SuReLI/laber.git}
\item \textbf{Relo}: \url{https://github.com/shivakanthsujit/reducible-loss.git}
\end{itemize}

\subsection{RL Hyperparameters}

In this section, we detail the hyperparameters used in the reinforcement learning (RL) algorithms across the Atari, Mujoco, and Habitat environments. The selection of these hyperparameters has a crucial impact on the algorithm's performance, and through a series of experiments, we determined the optimal configurations to ensure effectiveness and robustness in each task.

\stitle{Atari Environment}

In the Atari environment, we used the following hyperparameter configuration for Deep Q-Network (DQN) and its variants:

\begin{table}[H]
\centering
\caption{Hyperparameter Configuration for Atari Environment}
\begin{tabular}{l c}
\toprule
Parameter & Value \\
\midrule
Policy & Default \\
Environment & Specified \\
Learning Rate & 0.0001 \\
Buffer Size & 1,000,000 \\
Learning Starts & 100 steps \\
Batch Size & 32 \\
Tau & 1.0 \\
Discount Factor ($\gamma$) & 0.99 \\
Train Frequency & 4 steps \\
Gradient Steps & 1 step \\
Target Update Interval & 10,000 steps \\
Exploration Fraction & 0.1 \\
Exploration Initial $\epsilon$ & 1.0 \\
Exploration Final $\epsilon$ & 0.05 \\
Max Gradient Norm & 10 \\
Stats Window Size & 100 \\
Verbose & 0 \\
\bottomrule
\end{tabular}
\end{table}

\stitle{Mujoco Environment}

In the Mujoco environment, we used the following hyperparameter configuration for Deep Deterministic Policy Gradient (DDPG) and its variants:

\begin{table}[H]
\centering
\caption{Hyperparameter Configuration for Mujoco Environment}
\begin{tabular}{l c}
\toprule
\textbf{Parameter} & \textbf{Value} \\
\midrule
Learning Rate (Actor) & 0.001 \\
Learning Rate (Critic) & 0.001 \\
Discount Factor ($\gamma$) & 0.99 \\
Replay Buffer Size & 1,000,000 \\
Minibatch Size & 64 \\
Target Network Update Frequency & 1,000 steps \\
$\tau$ for Soft Update & 0.005 \\
Sampling Number ($m$) & 200 \\
Trajectory Length ($b$) & 10 \\
OU-Noise ($\mu$) & 0 \\
OU-Noise ($\theta$) & 0.15 \\
OU-Noise ($\sigma$) & 0.2 \\
Optimizer & Adam \\
Adam Parameters ($\beta_1$, $\beta_2$) & (0.9, 0.999) \\
\bottomrule
\end{tabular}
\label{tab:mujoco_ddpg_config}
\end{table}

\stitle{Habitat Environment}

In the Habitat environment, we used the following hyperparameter configuration for the reinforcement learning algorithms, including DDPG, DDPG+HER, DDPG+PER, and DDPG+EDER:

\begin{table}[H]
\centering
\caption{Hyperparameter Configuration for DDPG in Habitat Environment}
\begin{tabular}{l c}
\toprule
\textbf{Parameter} & \textbf{Value} \\
\midrule
Learning Rate (Actor) & 0.001 \\
Learning Rate (Critic) & 0.001 \\
Discount Factor ($\gamma$) & 0.99 \\
Replay Buffer Size & 1,000,000 transitions \\
Batch Size & 128 \\
Rollout Length & 128 steps \\
Optimizer & Adam \\
Adam Parameters ($\beta_1$, $\beta_2$) & (0.9, 0.999) \\
Target Network Update Rate ($\tau$) & 0.005 \\
Noise Type & Gaussian noise \\
Noise Parameters (mean, std) & (0, 0.2) \\
HER Strategy & Future strategy, 4 future goals per episode \\
PER Parameters ($\alpha$, $\beta$) & (0.6, 0.4, annealed to 1.0) \\
EDER Parameters ($\lambda$, rejection prob) & (0.5, 0.1) \\
Training Timesteps & 1,000,000 steps \\
\bottomrule
\end{tabular}
\label{tab:ddpg_habitat_hyperparams}
\end{table}

These hyperparameter configurations were carefully selected to ensure optimal performance in each task environment. Through these settings, we were able to effectively compare the performance of the EDER method with other baseline methods across different environments, validating its superiority and adaptability.

\onecolumn
\end{document}